\documentclass{article}

\usepackage[accepted]{icml2026}

\usepackage[utf8]{inputenc}   
\usepackage{amsmath}
\usepackage{amsthm}
\usepackage{amssymb}
\usepackage{amsfonts}
\usepackage{bm}
\usepackage{dsfont}
\usepackage{mathtools}
\usepackage{pifont}
\usepackage{xspace}
\usepackage{enumitem}
\usepackage{graphicx}
\usepackage{booktabs}

\usepackage{siunitx}
\usepackage[svgnames]{xcolor}
\usepackage{transparent}
\DeclareRobustCommand{\lightred}[1]{{\transparent{0.4}\textcolor{red}{#1}}}
\DeclareRobustCommand{\lightblue}[1]{{\transparent{0.4}\textcolor{blue}{#1}}}
\definecolor{indigo}{RGB}{75, 0, 130}
\definecolor{mediumpurple}{RGB}{147, 112, 219}
\definecolor{darkred}{RGB}{255, 0, 0}
\definecolor{darkyellow}{RGB}{192, 192, 0}

\usepackage{hyperref}		 
\hypersetup{
    colorlinks      = true,     
    urlcolor        = blue,     
    linkcolor       = purple,   
    citecolor       = violet    
}

\newcommand{\dw}{\Delta\vw}
\newcommand{\g}[1]{\bm{g_{#1}}}
\newcommand{\gr}{\bm{g_r}}
\newcommand{\gf}{\bm{g_f}}
\newcommand{\grb}{\bar{\bm{g}}_{\bm{r}}}
\newcommand{\gfb}{\bar{\bm{g}}_{\bm{f}}}
\newcommand{\gfu}{\hat{\bm{g}}_{\bm{f}}}
\newcommand{\grbp}{\mathop{\grb}\nolimits_{\perp}}
\newcommand{\gfbp}{\mathop{\gfb}\nolimits_{\perp}}
\newcommand{\Hr}{\bm{H_r}}
\newcommand{\eigenmin}{\lambda_{\text{min}}}
\newcommand{\eigenmax}{\lambda_{\text{max}}}
\newcommand{\perpen}[1]{{#1}_{\perp}}

\newcommand{\ours}{HAMU\xspace}
\newcommand{\primal}{HAMU-Q\xspace}
\newcommand{\reciprocal}{HAMU-U\xspace}

\renewcommand{\phi}{\varphi}
\renewcommand{\epsilon}{\varepsilon}

\newcommand{\lp}{\left(}
\newcommand{\rp}{\right)}
\newcommand{\lb}{\left[}
\newcommand{\rb}{\right]}
\newcommand{\lc}{\left\{}
\newcommand{\rc}{\right\}}

\newcommand{\floor}[1]{\left\lfloor #1 \right\rfloor}

\newcommand{\norm}[1]{\left\lVert #1 \right\rVert}
\newcommand{\abs}[1]{\left\lvert #1 \right\rvert}

\newcommand{\ud}{\mathrm{d}} 
\newcommand{\pd}[2]{\displaystyle\frac{\partial #1}{\partial #2}}

\newcommand{\squishstart}{
  \begin{list}{$\bullet$}{
    \setlength{\itemsep}{1pt}
    \setlength{\parsep}{0pt}
    \setlength{\topsep}{1pt}
    \setlength{\partopsep}{0pt}
    \setlength{\leftmargin}{1em}
    \setlength{\labelwidth}{1.5em}
    \setlength{\labelsep}{0.5em} 
  } 
}
\newcommand{\squishend}{
  \end{list}
}

\DeclarePairedDelimiterX{\infdivx}[2]{(}{)}{%
  #1\;\delimsize|\delimsize|\;#2%
}

\def\vzero{{\bm{0}}}

\def\vp{{\bm{p}}}

\def\vu{{\bm{u}}}
\def\vv{{\bm{v}}}
\def\vw{{\bm{w}}}
\def\vx{{\bm{x}}}

\def\vz{{\bm{z}}}

\def\mA{{\bm{A}}}
\def\mB{{\bm{B}}}

\def\mH{{\bm{H}}}
\def\mI{{\bm{I}}}

\newcommand{\bR}{\mathbb{R}}

\newcommand{\cL}{\mathcal{L}}

\theoremstyle{plain}
\newtheorem{theorem}{Theorem}[section]
\newtheorem{proposition}[theorem]{Proposition}
\newtheorem{lemma}[theorem]{Lemma}
\newtheorem{corollary}[theorem]{Corollary}
\theoremstyle{definition}

\theoremstyle{remark}

\icmltitlerunning{How Hard Can It Be? Hardness-Aware Multi-Objective Unlearning}

\begin{document}

\twocolumn[
  \icmltitle{How Hard Can It Be? Hardness-Aware Multi-Objective Unlearning}



  \icmlsetsymbol{equal}{*}

  \begin{icmlauthorlist}
    \icmlauthor{Jiangwei Chen}{equal,nus,i2r}
    \icmlauthor{Xinyuan Niu}{equal,nus,cfar}
    \icmlauthor{Rachael Hwee Ling Sim}{nus}
    \icmlauthor{Zhengyuan Liu}{i2r}
    \icmlauthor{Nancy F. Chen}{i2r,cfar}
    \icmlauthor{Bryan Kian Hsiang Low}{nus}
  \end{icmlauthorlist}

  \icmlaffiliation{nus}{Department of Computer Science, National University of Singapore, Singapore}
  \icmlaffiliation{i2r}{Institute for Infocomm Research, Agency for Science, Technology and Research, Singapore}
  \icmlaffiliation{cfar}{Centre for Frontier AI Research, Agency for Science, Technology and Research, Singapore}

  \icmlcorrespondingauthor{Bryan Kian Hsiang Low}{lowkh@comp.nus.edu.sg}

  \icmlkeywords{Machine Learning, ICML}

  \vskip 0.3in
]



\printAffiliationsAndNotice{\icmlEqualContribution}

\begin{abstract}
Machine unlearning aims to remove the influence of specific forget training data due to privacy, copyright or bias concerns while maintaining the model performance on the remaining retain data.
Existing unlearning algorithms, such as optimizing a weighted combination of losses, have tried to achieve these objectives of improving forget quality and maintaining retain utility.
However, they do not guarantee that these objectives can be improved by a specified extent for all forget and retain data.
In this work, we address this limitation with a novel and theoretically-grounded approach from a constrained optimization perspective.
Firstly, we identify that the \emph{hardness} of reconciling both objectives can be quantified by the similarity between the forget data and the retain data.
Next, we derive an unlearning algorithm~(\ours) with the overall goal of guaranteeing a specified improvement in forget quality while minimizing the retain utility cost/degradation by updating the model weights based on our hardness measure. 
Our hardness measure also informs users when retain utility degradation is unavoidable,
i.e.,~both objectives cannot be improved simultaneously, and stopping should be considered.
Our algorithm is applicable to non-convex models and is easily parallelizable, making it readily deployable in real-world scenarios.
We empirically demonstrate \ours's superior performance over baselines on both image and text datasets using large models.
Our code is available at \url{https://github.com/aoi3142/HAMU}.
\end{abstract}

\section{Introduction}
Machine unlearning~\citep{Cao2015-uz} presents an appealing paradigm to combat the data issues arising from the prevalent adoption of large \emph{machine learning}~(ML) models~\citep{Raza2025, 10847310},
such as privacy leakage~\citep{pmlr-v235-pawelczyk24a}, copyright infringement~\citep{eldan2023s} and harmful bias~\citep{DBLP:conf/aaai/WuZYWCZHZ025}.
Machine unlearning aims to fulfill both {objectives}:
\textbf{(1)} improve \emph{forget quality} by removing the influence of some forget data, from the ML models;
while \textbf{(2)} maintaining \emph{retain utility}, the ML model performance on the remaining retain data.
Unlearning usually achieves these two objectives by increasing the loss on the forget data~\citep{Zhang2024-yj} or approximating the retrained model which minimizes the loss on the retain data~\citep{Neel2021-bv}.

Most existing unlearning algorithms have neglected the possible conflict between both objectives.
To achieve \textbf{(1)}, unlearning may incur some \emph{cost of forgetting}, which is the negative impact on the ML model performance~(retain utility).
For example, it may lead to catastrophic forgetting where the unlearned model performs considerably worse than the retrained model~\citep{Nguyen2020-an, fan2025simplicity} or more generally retain utility degradation where the ML model performance or knowledge of the remaining retain data worsens, sacrificing \textbf{(2)}.
We term this phenomenon of improving forget quality causing retain utility degradation as \emph{collateral forgetting}.
Existing works have considered multi-objective unlearning~\citep{jin-etal-2025-unlearning} that attempts to simultaneously improve both objectives, however, it may not be feasible to do so for all possible forget and retain data.
An extreme and intuitive example is that collateral forgetting is unavoidable when the forget and retain data are identical.
This conflict oversight raises the following fundamental questions:
\textbf{How much is the \emph{cost of forgetting}~(in terms of retain utility degradation) to improve forget quality by a specified extent? 
How hard is it to reconcile both objectives in machine unlearning?}

We answer the above questions by quantifying the hardness of unlearning, $\kappa$, as a model-dependent continuous measure that depends on the similarity between the forget and retain data.
Then, we establish a theoretical connection which shows that the cost of forgetting, in terms of retain utility degradation, generally increases with the hardness measure~(Fig.~\ref{fig:hardness_regimes}).
For example, the cost is the highest in the hardest scenario where the forget and retain data are identical.
Moreover, we identify a hardness threshold $\kappa_2$~(Fig.~\ref{fig:hardness_regimes}), whose exceedance informs the model owner prior to unlearning that 
local collateral forgetting is unavoidable and small updates to improve forget quality will degrade retain utility~(i.e.,~the cost of forgetting is positive).

\textbf{Next, how can the model owner ensure that the forget quality improves by a specified extent and reduces the cost of forgetting, even for non-convex\footnote{Methods with theoretical guarantees usually require strong assumptions such as strongly-convex loss functions~\citep{Ullah2021-df, Allouah2025-en}, which limit their applicability to non-convex models.} ML models such as \emph{neural networks}~(NNs)?} 
At first glance, one solution would be to use a carefully tuned weighted combination of the forget and retain losses. 
However, this simple solution does not allow the model owner to guarantee that the forget quality would increase by a specified extent~\citep{Kurmanji2023-xm, duan2025ready2unlearn}.
In contrast, we formulate unlearning in each iteration as a constrained optimization problem where the goal is to minimize the cost of forgetting~(the first-order approximation of the retain utility degradation) while constraining that the linear approximation of the forget quality must improve by the specified extent. 
Our hardness-aware multi-objective unlearning~(\ours) algorithm decides on the unlearning weight update based on the hardness measure~(Fig.~\ref{fig:hardness_regimes}).
For harder scenarios, our \ours algorithm uses the rectified update which incorporates the forget data's gradients to guarantee forget quality improvement.
\ours is efficient but applicable to non-convex models as we consider the first-order approximation of the loss function limited to a small local radius.
The linear and layer-wise independence nature of \ours at each iteration also makes it scalable and parallelizable for use with \emph{large language models}~(LLMs).
We empirically show that \ours outperforms other baselines in terms of unlearning performance and reduces the cost of forgetting through extensive experiments.

To summarize, our key contributions include:
\squishstart
\item We introduce a new unlearning perspective --- the goal is to improve an objective (e.g.,~forget quality) by a specified extent while optimizing the other objective (e.g.,~minimizing the cost of forgetting/retain utility degradation)~(Sec.~\ref{sec:theory}).
\item We quantify the hardness of unlearning and establish a \emph{theoretical connection} between hardness and cost of forgetting~(Sec.~\ref{sec:hardness}). The hardness measure also determines the unlearning weight update in our \ours algorithm~(Sec.~\ref{sec:update}) and when \emph{collateral forgetting} occurs~(Sec.~\ref{sec:collateral}).
\item We adapt \ours to efficiently and effectively work for large, non-convex ML models~(Sec.~\ref{sec:algo}).
\item We demonstrate \ours consistently outperforms baselines across datasets and model architectures through extensive experiments~(Sec.~\ref{sec:exp}).
\squishend
\begin{figure}[t]
    \centering
    \includegraphics[width=\linewidth]{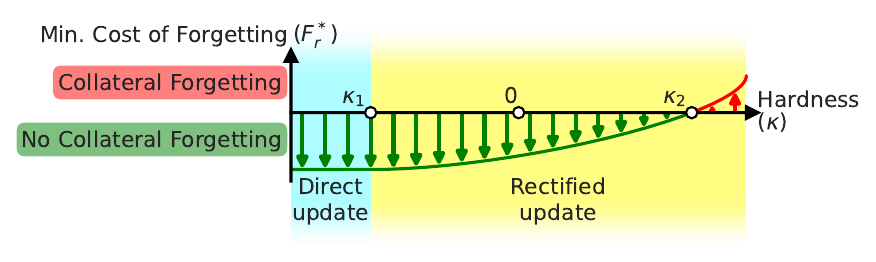}
    \caption{Hardness regimes characterized by the measure $\kappa$ for fixed $\norm{\gr}$ and $\norm{\gf}$.
    Higher $\kappa$ corresponds to higher cost of forgetting in terms of retain utility degradation.
    The threshold $\kappa_1$ and $\kappa_2$, respectively, determines when the rectified update and unavoidable retain utility degradation (i.e.,~collateral forgetting) occurs to improve the forget quality by at least $Q$.}
    \label{fig:hardness_regimes}
\end{figure}

\section{Multi-Objective Machine Unlearning}
We consider a supervised learning task on the dataset $D \triangleq \{(\vx_i, y_i)\}_{i=1}^n$, where $y_i \in [K] \triangleq \{1, \ldots, K\}$ represents classes or vocabulary tokens.
A model~(e.g.,~LLM) parameterized by $\vw \in \bR^d$ is trained or evaluated using the cross-entropy loss $L(\vp(\vx; \vw), y) \triangleq -\log p_y(\vx; \vw)$ for each sample $(\vx, y)$,
where $\vp(\vx; \vw) \in \Delta^{K-1}$ denotes the predicted class probabilities~(for classification) or distribution over the next token~(for next token prediction).
In machine unlearning, the training data $D$ is partitioned into the forget data $D_f \subseteq D$ and the retain data $D_r \triangleq D \setminus D_f$.
Let $\vw_0$ denote the weights of the original model that is trained to minimize the averaged cross-entropy loss on $D$.
At iteration $t$~(starting with $t=0$), the unlearning algorithm updates the model weights $\vw_t$ using information from $D_f$ or $D_r$, or combination thereof, depending on the objectives, to obtain $\vw_{t+1}$.

One common objective in machine unlearning is to approximate the retrained model~\citep{Georgiev2025-aj, DBLP:conf/emnlp/ChenY23},
i.e.,~the model trained from scratch on $D_r$.
This is usually achieved by minimizing the retain loss via fine-tuning~(e.g.,~gradient descent on $D_r$) the original model on the retain data~\citep{Neel2021-bv, bui2024newton}, which helps to maintain retain utility.
Due to the inadequacy of this objective to ensure forget quality~\citep{block2025machine}, the objective of increasing forget loss~(e.g.,~gradient ascent on $D_f$) has been proposed~\citep{Halimi2022-aj, Zhang2024-yj}.
However, these gradient-ascent methods may induce retain utility degradation~\citep{mavrothalassitis2025ascent}, making increasing forget loss insufficient as the sole objective~\citep{fan2025simplicity}.

Hence, instead of solely approximating the retrained model or increasing the forget loss, we focus on the multi-objective aspect of unlearning.
Existing multi-objective unlearning methods typically optimize a weighted combination of the objectives~\citep{Kurmanji2023-xm, duan2025ready2unlearn}.
However, there is no guarantee that the desired performance can be achieved for any of the objectives.
Furthermore, they have not considered the inherent statistical dependence between the forget and retain data, hence there is no attempt to reconcile the possibly conflicting objectives.
The two objectives of ensuring forget quality and maintaining retain utility are not the same for non-convex models.
In fact, we show that depending on the relationship between $D_f$ and $D_r$, these two objectives can be either conflicting or compatible.

\section{Conflict Analysis via Learning Dynamics}
\label{sec:conflict}
We begin by analyzing how taking a single gradient descent step on a sample $\vz' = (\vx', y')$ during unlearning affects the loss on another sample $\vz = (\vx,y)$.
Let $L_t(\vz) \triangleq L(\vp(\vx; \vw_t), y)$ denote the loss on $\vx$ at iteration $t$.
We are interested in the quantity $\Delta_{\vz'} L_t(\vz) \triangleq L_{t+1}(\vz) - L_t(\vz)$ for different choices of $\vz'$ and $\vz$,
where $\Delta_{\vz'}$ means the change in loss is induced by taking a single step on the sample $\vz'$.
We first consider the simplified case of fine-tuning on a single sample from the retain data.
Using first-order approximation\footnote{We defer the detailed derivation and discussion on why first-order approximation is sufficient to App.~\ref{app:first-order-loss-approximation}.}, for any $\vz_r = (\vx_r, y_r) \in D_r$, 
\begin{equation*}
    \Delta_{\vz_r} L_t(\vz_r) = L_{t+1}(\vz_r) - L_t(\vz_r) \approx -\eta \cdot \norm{\nabla_{\vw} L_t(\vz_r)}^2,
\end{equation*}
where $\eta$ is the learning rate, and $\nabla_{\vw}$ denotes the gradient with respect to the model parameters $\vw$. 
Hence, unlearning by fine-tuning on a sample from the retain data reduces the loss on that sample.
Now consider how fine-tuning on the same sample impacts the loss on a sample from the forget data.
For any $\vz_f = (\vx_f, y_f) \in D_f$ we have
\begin{equation*}
    \Delta_{\vz_r} L_t(\vz_f) \approx -\eta \cdot \nabla_{\vw} L_t(\vz_f) \cdot \nabla_{\vw} L_t(\vz_r).
\end{equation*}
Unlike the previous scenario, $\Delta_{\vz_r} L_t(\vz_f)$ is not always negative, but depends on the alignment between the two gradients $\nabla_{\vw} L_t(\vz_f)$ and $\nabla_{\vw} L_t(\vz_r)$.
When the two \textbf{gradients are aligned}~(positive dot product), the losses on the forget and retain samples both decrease.
When the \textbf{gradients are misaligned}~(negative dot product), the loss on the forget sample increases as desired.
The analysis naturally extends to batch gradients $L_t(D) \triangleq 1/|D| \sum_{\vz \in D} L_t(\vz)$.
When taking a single gradient step $-\eta \nabla_{\vw} L_t(D_r)$ to improve retain utility, the change in the batch forget loss and quality depends on the dot product $\nabla_{\vw} L_t(D_f) \cdot \nabla_{\vw} L_t(D_r)$.
Negative and positive gradient dot product leads to better and worse forget quality, respectively.
Intuitively, the dot product fundamentally determines how hard it is to reconcile both objectives of improving retain utility and forget quality in unlearning.
In the next section, we will show that \textbf{the gradient dot product also monotonically determines the minimum cost of forgetting, i.e.,~cost in retain utility to improve forget quality by a specified extent.}

\section{First-Order Convex Formulation}
\label{sec:theory}
Since optimizing for smaller $L_t(D_r)$ and larger $L_t(D_f)$ simultaneously can be conflicting and provides no guarantee for satisfying either objective, we consider the problem of optimizing for the best retain utility while stipulating a minimum requirement on the forget quality,
i.e.,~minimize $L_t(D_r)$ subject to $\Delta L_t(D_f) > Q$,
where $Q > 0$ is the per-iteration threshold for the forget quality requirement.
Following the first-order approximation in Sec.~\ref{sec:conflict}, we focus on the objectives in each iteration.
Specifically, at iteration $t$, the retain utility degradation (i.e.,~cost of forgetting) $\Delta L_t(D_r) \approx \gr \cdot \dw$ and the improvement in forget quality $\Delta L_t(D_f) \approx \gf \cdot \dw$, 
where $\gf \triangleq \nabla_{\vw} L_t(D_f)$, $\gr \triangleq \nabla_{\vw} L_t(D_r)$ are the averaged gradients,
and $\dw \triangleq \vw_{t+1} - \vw_t$ is the weight update at iteration $t$.
To ensure accurate approximation~(see App.~\ref{app:first-order-loss-approximation}), we restrict the per-iteration update to a local neighborhood,
i.e.,~$\norm{\dw} \leq R$ for some small $R > 0$.
Therefore, it is equivalent to solve the following optimization problem\footnote{Unlike multi-objective optimization problems where all objectives are \emph{minimized}, our formulation only minimizes the retain utility degradation and improves the forget quality~(i.e.,~\emph{minimizing} retain loss while \emph{increasing} forget loss) during unlearning.}:
\begin{equation} 
\label{problem:opt-ldr}
\begin{aligned}
    & \underset{\dw}{\text{minimize}}  && \gr \cdot \dw \\
    & \text{subject to}                && \gf \cdot \dw \geq Q, \quad \norm{\dw} \leq R .
\end{aligned}
\end{equation}
Our formulation is general and allows the use of alternative loss functions such as the NPO loss~\citep{Zhang2024-yj}.
We state our main results below and defer the full derivations to App.~\ref{app:proof}.

\subsection{Hardness of Unlearning}
\label{sec:hardness}
Problem~\eqref{problem:opt-ldr} is feasible when $Q \leq R \norm{\gf}$, under this feasible condition we can solve the problem analytically.
The following theorem provides the closed-form expression for the optimal cost of forgetting in terms of retain utility degradation given a minimum forget quality requirement.
\begin{theorem}
\label{theorem:opt-primal}
    Let $\kappa \triangleq \gr \cdot \gf$ be the gradient dot product 
    and $\perpen{\gr} \triangleq \gr - \frac{\gr \cdot \gf}{\norm{\gf}^2} \cdot \gf$ be the component of $\gr$ orthogonal to $\gf$.
    Assume\footnote{This assumption is typically satisfied in practice~(Sec.~\ref{sec:exp}).} $Q \leq R \norm{\gf}$, the optimal value to Problem~\eqref{problem:opt-ldr} and the minimum cost of forgetting $F_r^* \triangleq \min \gr \cdot \dw$ is given by
\begin{equation} 
\label{eq:opt-ldr}
    \begin{dcases}
        - R \norm{\gr} & \text{if } \kappa \leq \kappa_1 \\
        \frac{\kappa Q}{\norm{\gf}^2} - \norm{\perpen{\gr}} \sqrt{R^2 - \frac{Q^2}{\norm{\gf}^2}} & \text{otherwise},
    \end{dcases}
\end{equation}
where $\kappa_1 \triangleq - Q\norm{\gr} / R$.
\end{theorem}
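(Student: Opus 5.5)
We minimize a linear function over the intersection of a ball and a half-space, so the plan is to decompose $\dw$ along $\gf$ and its orthogonal complement and reduce everything to a two-variable problem. Write $\hat{\bm{u}} \triangleq \gf/\norm{\gf}$ and $\dw = a\,\hat{\bm{u}} + \bm{v}$ with $\bm{v}\perp\gf$. Then the constraints become $a\norm{\gf}\geq Q$ and $a^2+\norm{\bm{v}}^2\leq R^2$. The objective splits as $\gr\cdot\dw = \frac{\kappa}{\norm{\gf}}\,a + \perpen{\gr}\cdot\bm{v}$. For fixed $a$, Cauchy--Schwarz gives the minimizing $\bm{v} = -\sqrt{R^2-a^2}\,\perpen{\gr}/\norm{\perpen{\gr}}$ (any $\bm{v}$ if $\perpen{\gr}=\vzero$). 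Using the full radius is optimal since the $\bm{v}$-term is nonincreasing in $\norm{\bm{v}}$. This reduces the problem to the scalar problem
\begin{equation*}
\min_{a\in[Q/\norm{\gf},\,R]} \; h(a)\triangleq \frac{\kappa}{\norm{\gf}}\,a - \norm{\perpen{\gr}}\sqrt{R^2-a^2},
\end{equation*}
which is nonempty exactly when $Q\leq R\norm{\gf}$.

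Next we solve the scalar problem. The function $h$ is convex on $[-R,R]$, being linear plus a convex term $-c\sqrt{R^2-a^2}$. Its unconstrained minimizer over the full interval corresponds to the unconstrained ball problem, whose solution is $\dw=-R\gr/\norm{\gr}$. That is, $a^\star = -R\kappa/(\norm{\gr}\norm{\gf})$, with value $-R\norm{\gr}$, since $\norm{\gr}^2 = \kappa^2/\norm{\gf}^2+\norm{\perpen{\gr}}^2$. This can be verified directly by setting $h'(a)=0$ or by the Cauchy--Schwarz argument on the original problem. There are then two cases:
\begin{itemize}
\item If $a^\star\geq Q/\norm{\gf}$, which is equivalent to $\kappa\leq -Q\norm{\gr}/R=\kappa_1$, then $a^\star$ is feasible and the optimum is $-R\norm{\gr}$.
\item Otherwise $a^\star<Q/\norm{\gf}$. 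By convexity, $h$ is nondecreasing on $[Q/\norm{\gf},R]$, so the minimum is attained at $a=Q/\norm{\gf}$. Substituting gives $\frac{\kappa Q}{\norm{\gf}^2}-\norm{\perpen{\gr}}\sqrt{R^2-Q^2/\norm{\gf}^2}$, matching \eqref{eq:opt-ldr}.
\end{itemize}

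The main obstacle is handling the edge cases cleanly. First, the degenerate case $\perpen{\gr}=\vzero$, where $h$ is linear: the claimed formula still holds because $\norm{\gr}=|\kappa|/\norm{\gf}$ and the comparison reduces to the sign of $\kappa$ against $\kappa_1$. Second, we need $\gf\neq\vzero$, which is implied by feasibility since $Q>0$. Third, convexity gives monotonicity to the right of $a^\star$ only if $a^\star$ lies in $[-R,R]$; this holds since $|a^\star|\leq R$ by Cauchy--Schwarz. Alternatively, one could present the same argument through the KKT conditions, with multipliers for the half-space and ball constraints, since Slater's condition holds when $Q<R\norm{\gf}$. The boundary case $Q=R\norm{\gf}$ is trivial, because the feasible set is the single point $R\hat{\bm{u}}$.
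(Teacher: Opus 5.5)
Your proof is correct, and it takes a genuinely different route from the paper.

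The paper writes a Lagrangian with multiplier $\alpha$ for the half-space and $\beta$ for the ball, and enumerates the four sign patterns of $(\alpha,\beta)$. It reads off the direct update from $\alpha=0,\beta>0$ and the rectified update from $\alpha,\beta>0$. It then proves that $\alpha>0$ holds exactly when $\kappa>\kappa_1$, by a squaring argument split on the sign of $\kappa$. The two $\beta=0$ patterns are folded in as degenerate instances of the rectified case.

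You work entirely in the primal. The paper also decomposes along $\gf$ and uses the Cauchy--Schwarz choice of $\bm{v}$, but only inside its second case, to orient $\perpen{\dw}$ against $\perpen{\gr}$. For you, that step is an exact elimination of all but one variable. Convexity of $h$ and the location of the ball-only minimizer $a^\star$ then settle everything, with $\kappa_1$ emerging as the single inequality $a^\star\geq Q/\norm{\gf}$.

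Your route certifies global optimality directly. It does not rely on KKT necessity, which needs a constraint qualification that fails at $Q=R\norm{\gf}$. It also avoids the sign split in the threshold derivation, and it yields the optimizer of Proposition~\ref{proposition:update-ldr} at the same time. The paper's route instead exposes the multipliers, making explicit that the forget-quality constraint is active precisely when $\kappa>\kappa_1$. That is the interpretation the authors build their easy/hard regimes on.

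One small loose end: your $a^\star=-R\kappa/(\norm{\gr}\norm{\gf})$ is undefined when $\gr=\vzero$, so handle that case separately. There $h\equiv 0$ and $\kappa=\kappa_1=0$, and the formula correctly gives $0$.
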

As prior hardness measures are largely heuristic and unable to predict unlearning performance for a given instance~\citep{Zhao2024-wq}, Theorem~\ref{theorem:opt-primal} provides \emph{the first theoretical connection between a measurable quantity $\kappa$ and unlearning objective performance}, quantified by the minimum retain utility degradation subject to a sufficient forget quality improvement.
Since $\norm{\perpen{\gr}}$ also depends on $\kappa$, the precise relationship between $F_r^*$ and $\kappa$ is not immediately apparent.
We formalize this relationship below.
\begin{corollary}[Hardness Measure]
\label{corollary:hardness}
    The minimum cost of forgetting $F_r^*$ is monotonically non-decreasing with $\kappa$ for fixed $\norm{\gr}$ and $\norm{\gf}$.
\end{corollary}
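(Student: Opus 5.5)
Write $a \triangleq \norm{\gr}$, $b \triangleq \norm{\gf}$ (both fixed) and view $F_r^*$ from Theorem~\ref{theorem:opt-primal} as a function of $\kappa \in [-ab, ab]$, the range allowed by Cauchy--Schwarz. The first step eliminates the hidden dependence of $\norm{\perpen{\gr}}$ on $\kappa$. Since $\perpen{\gr}$ is the orthogonal projection of $\gr$ onto the complement of $\gf$, Pythagoras gives $\norm{\perpen{\gr}}^2 = a^2 - \kappa^2/b^2$. Setting $c \triangleq \sqrt{R^2 - Q^2/b^2} \ge 0$, which is well defined because $Q \le Rb$, the second branch becomes
\begin{equation*}
    G(\kappa) \triangleq \frac{Q\kappa}{b^2} - c\sqrt{a^2 - \kappa^2/b^2},
\end{equation*}
a function of $\kappa$ alone.

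Next I would establish monotonicity of each branch. On $\kappa \le \kappa_1$ the value $-Ra$ is constant, hence non-decreasing. On $(\kappa_1, ab]$ I would differentiate:
\begin{equation*}
    G'(\kappa) = \frac{Q}{b^2} + \frac{c\,\kappa}{b^2\sqrt{a^2 - \kappa^2/b^2}}.
\end{equation*}
When $\kappa \ge 0$ both terms are non-negative. When $\kappa < 0$, the condition $G'(\kappa) \ge 0$ is equivalent to $Q\sqrt{a^2-\kappa^2/b^2} \ge c\abs{\kappa}$. Squaring and substituting $c^2 = R^2 - Q^2/b^2$ gives $Q^2a^2 - Q^2\kappa^2/b^2 \ge R^2\kappa^2 - Q^2\kappa^2/b^2$, which simplifies to $\abs{\kappa} \le Qa/R$. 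For negative $\kappa$ this is exactly $\kappa \ge \kappa_1$. So $G$ is non-decreasing on $[\kappa_1, ab]$, and in fact $\kappa_1$ is precisely the stationary point (the minimizer) of $G$, which explains the threshold. The derivative is singular at $\kappa = \pm ab$; there I would argue by continuity of $G$, or use a direct comparison instead of the derivative.

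Finally I would check that the two branches glue continuously at $\kappa_1$, so that monotone pieces give a globally monotone function. At $\kappa_1 = -Qa/R$ we have $a^2 - \kappa_1^2/b^2 = a^2(R^2b^2 - Q^2)/(R^2b^2)$, so $\sqrt{a^2 - \kappa_1^2/b^2} = ac/R$. Hence
\begin{equation*}
    G(\kappa_1) = -\frac{Q^2a}{Rb^2} - \frac{ac^2}{R} = -\frac{a}{R}\Bigl(\frac{Q^2}{b^2} + R^2 - \frac{Q^2}{b^2}\Bigr) = -Ra,
\end{equation*}
which matches the first branch. The step needing the most care is the sign analysis of $G'$ for negative $\kappa$. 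The squaring there is legitimate because both sides are non-negative. The degenerate cases also need separate handling: if $c=0$ (i.e.\ $Q = Rb$) then $G$ is linear with positive slope, and if $\kappa_1 < -ab$ the first branch is empty; neither case affects the conclusion.
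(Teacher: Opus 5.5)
Your proposal is correct and follows essentially the same route as the paper: constant on $\kappa \le \kappa_1$, then differentiating the second branch and showing $\ud F_r^*/\ud\kappa \ge 0$ for $\kappa > \kappa_1$. The sign argument differs slightly: the paper rewrites the derivative as $\frac{1}{\norm{\gf}^2}\bigl(Q + \cot\theta\sqrt{R^2\norm{\gf}^2 - Q^2}\bigr)$, which vanishes at $\kappa_1$ and increases with $\kappa$, whereas you square directly to get $\abs{\kappa} \le Q\norm{\gr}/R$. Your check that the branches agree at $\kappa_1$ (equivalently, the Cauchy--Schwarz bound $F_r^* \ge -R\norm{\gr}$) and your handling of the endpoints $\kappa = \pm\norm{\gr}\norm{\gf}$ are steps the paper omits but that are needed for monotonicity across the threshold; also, since $Q \le R\norm{\gf}$ gives $\kappa_1 \ge -\norm{\gr}\norm{\gf}$, your empty-first-branch case never arises.
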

Consequently, $\kappa$ naturally serves as a hardness measure for unlearning.
This makes intuitive sense as a larger $\kappa$ implies $\gr$ and $\gf$ are more aligned, and $D_r$ and $D_f$ are more similar.
Hence, it is more difficult to reconcile both objectives,
i.e.,~minimizing retain loss while ensuring forget quality, 
resulting in a higher cost of forgetting and retain utility degradation $F_r^*$.
Consider the extreme example when $D_r = D_f$, it becomes impossible to maintain retain utility while improving forget quality, as the losses on the forget and retain data are identical, making it a particularly hard instance.
Note that the hardness measure $\kappa$ depends not only on the similarity between $D_r$ and $D_f$, but also on the model itself.
This is intuitive, as reconciling the objectives can become more difficult after unlearning the same forget data $D_f$ for several iterations.

\subsection{Hardness-Aware Unlearning Update}
\label{sec:update}
Interestingly, the hardness measure $\kappa$ not only indicates the difficulty of the unlearning instance and predicts the cost of forgetting, but also informs the optimal unlearning update, as detailed in the following proposition.
\begin{proposition}[Optimal Unlearning Update]
\label{proposition:update-ldr}
    Assume $Q \leq R \norm{\gf}$, the minimum cost of forgetting $F_r^*$ is achieved via the optimal unlearning update $\dw^*$, given by
\begin{equation} 
\label{eq:update-ldr}
    \begin{dcases}
        - \frac{R}{\norm{\gr}} \cdot \gr & \text{if } \kappa \leq \kappa_1 \\
        \frac{Q}{\norm{\gf}^2} \cdot \gf - \sqrt{R^2 - \frac{Q^2}{\norm{\gf}^2}} \cdot \frac{\perpen{\gr}}{\norm{\perpen{\gr}}} & \text{otherwise}.
    \end{dcases}
\end{equation}
\end{proposition}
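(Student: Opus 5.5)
We verify directly that the candidate $\dw^*$ in \eqref{eq:update-ldr} is feasible for Problem~\eqref{problem:opt-ldr} and attains the value $F_r^*$ from Theorem~\ref{theorem:opt-primal}. Optimality then follows, since Theorem~\ref{theorem:opt-primal} already gives $F_r^*$ as the optimal value. We treat the two regimes separately.

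\emph{Case $\kappa \leq \kappa_1$.} Take $\dw^* = -R\,\gr/\norm{\gr}$. Then $\norm{\dw^*} = R$ and $\gr\cdot\dw^* = -R\norm{\gr}$. This is the unconstrained minimum over the ball, by Cauchy--Schwarz. For the forget constraint we compute
\[
\gf\cdot\dw^* = -R\kappa/\norm{\gr},
\]
and this is $\geq Q$ exactly when $\kappa \leq -Q\norm{\gr}/R = \kappa_1$. So the ball minimizer is feasible, and it matches the first line of \eqref{eq:opt-ldr}.

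\emph{Case $\kappa > \kappa_1$.} Write $\dw^* = a\,\gf - b\,\vu$ with $a = Q/\norm{\gf}^2$, $b = \sqrt{R^2 - Q^2/\norm{\gf}^2}$ and $\vu = \perpen{\gr}/\norm{\perpen{\gr}}$. The number $b$ is real by the assumption $Q \leq R\norm{\gf}$.
\begin{itemize}
\item Since $\vu \perp \gf$, we get $\gf\cdot\dw^* = Q$, so the forget constraint holds with equality.
\item By Pythagoras, $\norm{\dw^*}^2 = Q^2/\norm{\gf}^2 + b^2 = R^2$, so the norm constraint holds with equality.
\item Decompose $\gr = (\kappa/\norm{\gf}^2)\gf + \perpen{\gr}$. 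Then
\[
\gr\cdot\dw^* = \kappa Q/\norm{\gf}^2 - b\norm{\perpen{\gr}},
\]
which is the second line of \eqref{eq:opt-ldr}.
\end{itemize}

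The delicate point is the degenerate subcase $\perpen{\gr} = \vzero$, where $\vu$ is undefined. There $\gr$ is parallel to $\gf$. Because $\kappa > \kappa_1$ and $\kappa_1 < 0 \le \kappa$ would otherwise fail only if $\gr = c\,\gf$ with $c\norm{\gf}^2 > -Q\norm{\gr}/R$, we handle it as follows:
\begin{itemize}
\item If $c \geq 0$, any unit $\vu \perp \gf$ attains the value.
\item If $c < 0$, then $\kappa = -\norm{\gr}\norm{\gf}$, and the condition $\kappa > \kappa_1$ forces $R\norm{\gf} < Q$, contradicting feasibility.
\end{itemize}
So the formula is interpreted with an arbitrary unit orthogonal direction whenever $\perpen{\gr} = \vzero$.

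\emph{Uniqueness and self-containment.} If we also want $\dw^*$ to be the minimizer rather than just one minimizer, or if we prefer not to lean on Theorem~\ref{theorem:opt-primal}, a short KKT argument suffices. The problem is convex, with a linear objective and convex constraints, and Slater's condition holds when $Q < R\norm{\gf}$. The Lagrangian stationarity condition $\gr - \mu\gf + 2\nu\dw = 0$ forces $\dw \in \mathrm{span}(\gr,\gf)$. Within this two-dimensional span, the tangency between the line $\gf\cdot\dw = Q$ and the circle $\norm{\dw} = R$ pins down exactly the two-case formula above. The main obstacle is confirming that both constraints are active in the second case, i.e.\ that $\mu > 0$ there. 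This reduces to showing that the ball minimizer $-R\gr/\norm{\gr}$ is infeasible precisely when $\kappa > \kappa_1$, which is the computation in the first case.
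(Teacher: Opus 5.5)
\textbf{Verdict: correct, by a different route.} Your proof verifies the candidate rather than deriving it, which is not how the paper proceeds.

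\textbf{The paper's route.} It writes the KKT system for Problem~\eqref{problem:opt-ldr} and splits on the multipliers $(\alpha,\beta)$:
\begin{itemize}
\item $\alpha=0,\beta>0$ gives the direct update, and this is consistent exactly when $\kappa\le\kappa_1$;
\item $\alpha,\beta>0$ gives the rectified update, and solving the parallel part of stationarity for $\alpha$ shows $\alpha>0$ iff $\kappa>\kappa_1$;
\item the $\beta=0$ cases are shown to be collinear degenerations of the rectified formula.
\end{itemize}
That one computation proves Theorem~\ref{theorem:opt-primal} and this proposition together, and it explains where $\kappa_1$ comes from.

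\textbf{Your route, direct regime.} This part is self-contained and cleaner than the paper's. By Cauchy--Schwarz, $-R\gr/\norm{\gr}$ minimizes over the ball alone, and it is feasible iff $\kappa\le\kappa_1$. So it is optimal for the smaller feasible set.

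\textbf{Your route, rectified regime.} Here the lower bound is borrowed from Theorem~\ref{theorem:opt-primal}. In the paper, that theorem is proved by exactly the KKT analysis that produces this update. So what you have shown is consistency with the theorem, not an independent proof of optimality. This is legitimate given the theorem, but you can close it cheaply without multipliers:
\begin{itemize}
\item Any feasible $\dw=y\,\gfu+\perpen{\dw}$ has $y\ge Q/\norm{\gf}$ and $\norm{\perpen{\dw}}\le\sqrt{R^2-y^2}$.
\item Hence $\gr\cdot\dw\ge h(y)\triangleq \kappa y/\norm{\gf}-\norm{\perpen{\gr}}\sqrt{R^2-y^2}$.
\item For $\perpen{\gr}\ne\vzero$, $h$ is convex on $[-R,R]$ with minimizer $y_0=-\kappa R/(\norm{\gr}\norm{\gf})$.
\item $\kappa>\kappa_1$ is exactly $y_0<Q/\norm{\gf}$. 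So the minimum over the feasible range is $h(Q/\norm{\gf})$, which is the second line of \eqref{eq:opt-ldr}.
\end{itemize}
The same argument recovers the first branch when $y_0\ge Q/\norm{\gf}$.

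\textbf{Smaller points.}
\begin{itemize}
\item \emph{Collinear subcase.} The paper drops the orthogonal term and takes the minimum-norm point $(Q/\norm{\gf}^2)\gf$. You take an arbitrary unit $\vu\perp\gf$, which needs $d\ge 2$. Both are optimal: for $\gr=c\gf$ with $c>0$, every point of the disk $\{\gf\cdot\dw=Q,\ \norm{\dw}\le R\}$ is a minimizer. Consequently your uniqueness remark fails there when $Q<R\norm{\gf}$.
\item \emph{The value $c=0$.} It cannot occur in the second branch. $\gr=\vzero$ gives $\kappa=\kappa_1=0$, so it lands in the first branch, where $-R\gr/\norm{\gr}$ is undefined. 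The paper's Case~3 mishandles this too, asserting $0>-Q\norm{\gr}/R$, which is $0>0$.
\item \emph{Wording.} The sentence introducing the degenerate subcase is garbled, though both of its bullets are right.
\item \emph{KKT sketch.} Stationarity forces $\dw\in\operatorname{span}\{\gr,\gf\}$ only when $\nu>0$. If $\nu=0$ it forces $\gr=\mu\gf$ instead, which is the paper's Cases~3--4. Also, the rectified point is where the hyperplane $\gf\cdot\dw=Q$ meets the sphere, not a tangency.
\item With those corrections, your reduction of $\mu>0$ to infeasibility of the ball minimizer (via Slater when $Q<R\norm{\gf}$) is a neat shortcut compared with the paper's explicit computation of $\alpha$.
\end{itemize}
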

The threshold $\kappa_1$ determines the level of hardness and governs which type of unlearning update should be applied.
When $\kappa \leq \kappa_1$, the objectives are compatible and unlearning is relatively ``easy''.
It is sufficient to apply the \emph{direct} update in the same direction as $-\gr$ to improve retain utility without violating the constraint on forget quality since $\gf \cdot \dw^* \geq Q$ for $\kappa \leq \kappa_1$.
Conversely, the objectives become conflicting when $\kappa > \kappa_1$.
If we still apply the direct update, the forget quality constraint would be violated as 
\begin{equation*}
    \gf \cdot \lp -\frac{R}{\norm{\gr}} \cdot \gr \rp = -\frac{\kappa R}{\norm{\gr}} < -\frac{R}{\norm{\gr}} \cdot \frac{-Q \norm{\gr}}{R} = Q.
\end{equation*}
Thus, when it is ``hard'' to reconcile the two unlearning objectives, we can no longer move freely in the direction of $-\gr$, but must instead apply the \emph{rectified} update, which incorporates the direction of $\gf$ to ensure the forget quality improves by the specified extent $Q$.
In Fig.~\ref{fig:unlearn_update}, we illustrate our unlearning updates under different scenarios.
\begin{figure}[t]
    \centering
    \includegraphics[width=\linewidth]{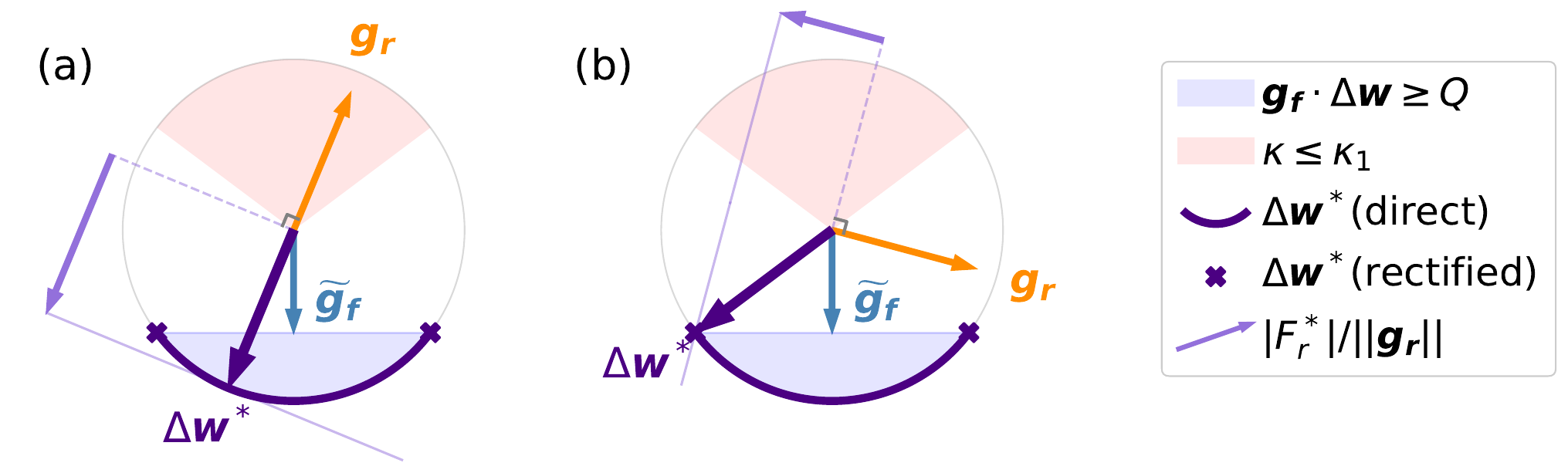}
    \caption{A 2D illustration of \ours update. 
    Problem~\eqref{problem:opt-ldr} restricts the weight update $\dw$ within a circle of radius $R$ and the \lightblue{region} where the forget quality constraint is satisfied. 
    In (a), the retain gradient $\gr$ lies in the ``easy'' \lightred{region} ($\kappa \leq \kappa_1$), hence the \textbf{direct update} is applied, which is in the opposite direction of $\gr$. 
    In (b), the retain gradient lies outside the ``easy'' region ($\kappa > \kappa_1$), the \textbf{rectified update} is applied,
    which mixes $\tilde{\bm{g}}_{\bm{f}}$ and a scaled $\perpen{\gr}$.
    Here, $\tilde{\bm{g}}_{\bm{f}} \triangleq (Q / \norm{\gf}^2) \gf$ and we show all optimal unlearning \textcolor{indigo}{updates} for a fixed $\gf$ and varying $\gr$.
    The resulting change in the retain \textcolor{mediumpurple}{objective} $F_r^* = \gr \cdot \dw^*$ is also plotted.
    Note that in both (a) and (b), there is no collateral forgetting ($F_r^* < 0$), and a larger $\abs{F_r^*}$ indicates better resulting retain utility.}
    \label{fig:unlearn_update}
\end{figure}

\subsection{Unavoidable Collateral Forgetting}
\label{sec:collateral}
Problem~\eqref{problem:opt-ldr} only imposes a constraint on the forget quality, and optimizes for the lowest cost of forgetting.
However, it is possible that $F_r^* > 0$ and retain utility degrades.
This inadvertent erasure of knowledge is also observed in other unlearning works~\citep{Nguyen2020-an, tian-etal-2024-forget}.
We term this phenomenon of improving forget quality causing retain utility degradation as \emph{collateral forgetting}.
To identify the per-iteration condition under which collateral forgetting is unavoidable, we investigate an alternative formulation that also imposes a constraint on the retain utility:
\begin{equation} 
\label{problem:impossible-first-order}
    \gr \cdot \dw \leq 0, \quad 
    \gf \cdot \dw \geq Q 
    \quad \text{and} \quad
    \norm{\dw} \leq R.
\end{equation}
Local collateral forgetting is unavoidable when it is infeasible to find a solution $\dw$ to Problem~\eqref{problem:impossible-first-order}.
The following theorem establishes the exact boundary for this infeasibility.
\begin{theorem}[Local Collateral Forgetting Condition]
\label{theorem:cf-first-order}
    Let $\kappa_2 \triangleq \sqrt{\lp \norm{\gr} \norm{\gf} \rp ^2 - Q^2 \norm{\gr}^2 / {R^2}}$.
    Problem~\eqref{problem:impossible-first-order} admits no feasible solution for $\dw$ if and only if $\kappa > \kappa_2$, implying that local collateral forgetting is unavoidable.
\end{theorem}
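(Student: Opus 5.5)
Problem~\eqref{problem:impossible-first-order} is feasible exactly when the optimal value $F_r^*$ of Problem~\eqref{problem:opt-ldr} is at most $0$. Indeed, any feasible point of \eqref{problem:impossible-first-order} is feasible for \eqref{problem:opt-ldr} with objective $\leq 0$. Conversely, under the standing assumption $Q \leq R\norm{\gf}$, Problem~\eqref{problem:opt-ldr} is feasible with compact feasible set, so its minimizer $\dw^*$ of Proposition~\ref{proposition:update-ldr} attains $F_r^*$; if $F_r^*\le 0$, then $\dw^*$ is feasible for \eqref{problem:impossible-first-order}. So the plan is to reduce the theorem to characterising the sign of $F_r^*$ using Theorem~\ref{theorem:opt-primal}. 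If $Q > R\norm{\gf}$, both problems are infeasible; then $\kappa_2$ is not real, and I would note that the statement implicitly assumes $Q \le R\norm{\gf}$.

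\textbf{Easy regime.} When $\kappa \le \kappa_1$, we have $F_r^* = -R\norm{\gr} \le 0$, so the problem is feasible. I would check consistency: since $\kappa_1 = -Q\norm{\gr}/R \le 0 \le \kappa_2$, this case lies on the ``feasible'' side of the claimed threshold.

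\textbf{Rectified regime.} When $\kappa > \kappa_1$, set $a \triangleq \sqrt{R^2 - Q^2/\norm{\gf}^2} \ge 0$ and use the Pythagorean identity $\norm{\perpen{\gr}}^2 = \norm{\gr}^2 - \kappa^2/\norm{\gf}^2$. Then $F_r^* \le 0$ is equivalent to
\[
\kappa Q \le \norm{\gf}\, a \sqrt{\norm{\gr}^2\norm{\gf}^2 - \kappa^2}.
\]
If $\kappa \le 0$, this holds trivially, and $\kappa \le 0 \le \kappa_2$ agrees with the claim. If $\kappa > 0$, both sides are nonnegative, so squaring is reversible and gives
\[
\kappa^2 Q^2 \le \norm{\gf}^2 a^2\bigl(\norm{\gr}^2\norm{\gf}^2 - \kappa^2\bigr).
\]
Note that $\norm{\gf}^2 a^2 = R^2\norm{\gf}^2 - Q^2$. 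Rearranging gives $\kappa^2 R^2 \norm{\gf}^2 \le (R^2\norm{\gf}^2 - Q^2)\norm{\gr}^2\norm{\gf}^2$, that is, $\kappa^2 \le \norm{\gr}^2\norm{\gf}^2 - Q^2\norm{\gr}^2/R^2 = \kappa_2^2$. Hence $F_r^* \le 0$ if and only if $\kappa \le \kappa_2$, and infeasibility holds exactly when $\kappa > \kappa_2$.

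\textbf{Main obstacle.} The hard part is the bookkeeping of signs and degenerate cases rather than any deep step. The squaring step needs $\kappa > 0$. I would also handle $\norm{\gf} = 0$ (impossible when $Q > 0$ and the problem is feasible) and $\perpen{\gr} = 0$ (the formula still holds with the second term vanishing). Finally, I would double-check the boundary convention: at $\kappa = \kappa_2$ we get $F_r^* = 0$, which is still feasible, matching the strict inequality in ``$\kappa > \kappa_2$''.
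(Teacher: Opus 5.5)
Your proof is correct and follows essentially the same route as the paper's. Both reduce infeasibility to $F_r^*>0$ via Theorem~\ref{theorem:opt-primal}, dispose of the direct regime and the case $\kappa\le 0$ trivially, and square for $\kappa>0$ to obtain $\kappa^2>\kappa_2^2$. Your version is slightly more careful than the paper's in two places: you justify the converse (feasibility whenever $F_r^*\le 0$, because the minimum is attained on the compact feasible set), and you flag the implicit assumption $Q\le R\norm{\gf}$.
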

The infeasibility condition is again governed by the hardness measure $\kappa$.
When the problem is too hard~($\kappa > \kappa_2$), it becomes impossible to satisfy the forget quality constraint without compromising the retain utility.
By leveraging this condition, model owners can identify unavoidable local collateral forgetting prior to the unlearning process.
To better understand this condition, we revisit the extreme example when $D_r = D_f$, in which collateral forgetting is clearly unavoidable.
Theorem~\ref{theorem:cf-first-order} precisely captures this as the hardness $\kappa$ exceeds the threshold:
\begin{equation*}
    \kappa = \norm{\gr} \norm{\gf}  >  \sqrt{\norm{\gr}^2 \lp \norm{\gf}^2 - \frac{Q^2}{R^2} \rp} = \kappa_2.
\end{equation*}
The threshold $\kappa_2$ provides a clear interpretation.
The maximum dot product $\norm{\gr} \norm{\gf}$ represents the maximum hardness of the problem.
The condition implies that if the actual hardness is too close to this maximum, then Problem~\eqref{problem:impossible-first-order} becomes infeasible.
The term $- Q^2 \norm{\gr}^2 / {R^2}$ serves as a safety margin: 
a more negative value results in a lower threshold $\kappa_2$ that is more easily exceeded by $\kappa$. 
For example, when $Q$ is large, we place a stricter requirement on the forget quality, which makes it harder to satisfy both forget and retain constraints simultaneously.
The thresholds $\kappa_1$ and $\kappa_2$ partition the unlearning problem into distinct regimes of hardness, as illustrated in Fig.~\ref{fig:hardness_regimes}.

\section{The \ours Algorithm}
\label{sec:algo}
Building on the theoretical analysis in Sec.~\ref{sec:theory}, we propose the \emph{hardness-aware multi-objective unlearning}~(\ours) algorithm.
The core of \ours relies on Proposition~\ref{proposition:update-ldr} for the unlearning update and Theorem~\ref{theorem:cf-first-order} for the stopping criterion.
These results imply that after $T$ iterations, our \ours algorithm can improve the constraint by at least $TQ$, as long as linear approximation holds.

\textbf{Stochastic Update.}
As computing the full gradients $\gr$ and $\gf$ can be computationally expensive, we sample a batch of examples from $D_r$ and $D_f$ at each iteration and use the batch gradient estimates $\grb$ and $\gfb$ to update the weights.
The hardness measure~($\kappa$) and thresholds~($\kappa_1$, $\kappa_2$) are estimated using these batch gradients for the remainder of this section.
Based on Proposition~\ref{proposition:update-ldr}, at each iteration, we first estimate the hardness $\kappa$, and then apply either the direct update or the rectified update based on the threshold $\kappa_1$.
Following convention in the literature, we use the learning rate $\eta$ as our hyperparameter.
Specifically, we set $R = \eta \norm{\grb}$, under this choice of $R$, the direct update coincides with stochastic gradient descent on $D_r$.
We also investigate the effects of varying batch size in App.~\ref{app:ablation_batch_size} and observe that our method is relatively robust to batch size.

\textbf{Stopping Criterion.}
Before applying the unlearning update, we first compare the hardness $\kappa$ to the threshold $\kappa_2$, and terminate the algorithm if the local collateral forgetting condition is met~($\kappa > \kappa_2$).
Theorem~\ref{theorem:cf-first-order} suggests that the terminated algorithm cannot improve the approximate forget quality by $Q$ without degrading the approximate retain utility.
When $Q=0$, a local approximate Pareto optimum is reached.
This convergence to local optima is inherent to first-order methods and is commonly observed in deep learning optimization~\citep{pmlr-v49-lee16}.
These local minima are typically easy to find and well-performing~\citep{DBLP:conf/aistats/ChoromanskaHMAL15}, as demonstrated by our experiment results.
We discuss practical adaptations of our stopping criterion in App.~\ref{app:stopping_criterion}.

\textbf{Objective Variants.}
We propose the \primal and the \reciprocal algorithms based on two constrained optimization formulations.
The former is based on Problem~\eqref{problem:opt-ldr} while the latter is based on Problem~\eqref{problem:opt-ldf} where we optimize for the best forget quality while requiring the improvement on the retain utility to exceed $U$~(App.~\ref{app:retain-constrained}).
The pseudocode for \primal and \reciprocal are provided in Alg.~\ref{alg:hamu-q} and Alg.~\ref{alg:hamu-u}, respectively.
Here, $Q$ or $U$ are user-chosen parameters depending on their minimum requirements on forget quality or retain utility, and the only hyperparameter of our algorithms is the learning rate $\eta$.
We provide guidance on how to select $Q$ and $U$ in Sec.~\ref{sec:cv-exp}.

\textbf{Practical Applicability.}
In App.~\ref{app:first-order-loss-approximation}, we discuss in mathematical details when \ours first-order approximations are applicable to complex non-convex models. 
The approximation error is negligible when the learning rate and the maximum Hessian eigenvalue are small.
In App.~\ref{app:second-order}, we also discuss the second-order formulation which requires a convexity assumption and computation of the Hessian and may be impractical for large ML models.
Furthermore, \ours redoes the approximation at each iteration, \emph{thereby ensuring consistent approximation along the unlearning trajectory}.
\ours only introduces minimal computational overhead~(gradient dot product calculation) beyond standard training and is highly parallelizable due to its layer-wise constraint independence~(Sec.~\ref{sec:layer_specific_constraint}).
We report the GPU memory and computational cost in App.~\ref{app:cv_unlearning_setup}.

\subsection{Parallel and Practical Implementation}
\label{sec:layer_specific_constraint}
Our algorithm scales efficiently to large models by exploiting layer-wise independence, enabling straightforward multi-GPU parallelization.
We decompose the flattened weights $\vw \in \bR^d$ into $\ell$ components corresponding to the layers of the model, such that $\vw = [\vw^{(1)}, \ldots, \vw^{(\ell)}]$, where $\vw^{(i)} \in \bR^{d_i}$ denotes the weights of layer $i$.
The gradients $\gf$, $\gr$ and the weight update $\dw$ are partitioned similarly.
We distribute the global forget quality constraint $Q$ into layer-specific constraints $(Q_i)_{i=1}^{\ell}$ such that $\sum_{i=1}^{\ell} Q_i = Q$.
Enforcing the local condition $\gf^{(i)} \cdot \dw^{(i)} \geq Q_i$ guarantees compliance with the global constraint:
\begin{equation*}
    \gf \cdot \dw = \sum_{i=1}^{\ell} \gf^{(i)} \cdot \dw^{(i)} \geq \sum_{i=1}^{\ell} Q_i = Q.
\end{equation*}
This allows us to perform layer-wise weight update $\dw^{(i)}$ in Eq.~\eqref{eq:update-ldr} using layer-specific gradients $\gf^{(i)}$ and $\gr^{(i)}$.
Hence, our algorithm can be efficiently parallelized by assigning different layers to different GPUs.
More details about the parallelized implementation can be found in App.~\ref{app:parallelized}.
We also discuss how to improve the efficiency of \ours by using optimizers and practical compatibility of \ours with modern deep learning frameworks in App.~\ref{app:practical_considerations}.

Practitioners have observed that different layers in the same model encode qualitatively distinct information,
both for \emph{convolutional neural networks}~(CNNs) and LLMs~\citep{NIPS2017_dc6a7e65, pmlr-v267-skean25a}.
Consequently, these layers exhibit varying sensitivities to the same data.
To leverage this insight, instead of assigning each layer the same uniform constraint $Q / \ell$, we set the layer-wise constraint $Q_i$ to be proportional to the product of the gradient norms,
\begin{equation*}
    Q_i \triangleq \frac{\norm{\gr^{(i)}}\norm{\gf^{(i)}}}{\sum_{j=1}^{\ell} \norm{\gr^{(j)}}\norm{\gf^{(j)}}} \cdot Q.
\end{equation*}
Intuitively, a layer $i$ with larger gradient norms is more sensitive to optimization objectives and can contribute more to improving the forget quality, thus, should be assigned a higher $Q_i$.
We demonstrate how this technique further improves unlearning performance in Sec.~\ref{sec:cv-exp}.

\begin{algorithm}[t]
\caption{\primal}
\label{alg:hamu-q}
\begin{algorithmic}[1]
\STATE {\bfseries Input:} initial weights $\vw_0$, retain data $D_r$, forget data $D_f$, batch size $m$, learning rate $\eta$, maximum iterations $T$, forget quality constraint $Q>0$
\FOR{$t = 0,1,\dots,T-1$}
    \STATE Sample i.i.d.~size-$m$ batches $B_r \!\sim\! D_r$ and $B_f \!\sim\! D_f$
    \STATE $\grb \gets \nabla_{\vw} L_t(B_r); \ \gfb \gets \nabla_{\vw} L_t(B_f)$
    \STATE $R \gets \eta \norm{\grb}$
    \IF{$Q > R \norm{\gfb}$}
        \STATE \textbf{break}
    \ENDIF
    \STATE $\bar{\kappa} \gets \grb \cdot \gfb$
    \STATE $\bar{\kappa}_1 \gets -{Q \norm{\grb}} / {R}$
    \STATE $\bar{\kappa}_2 \gets \sqrt{\lp \norm{\grb} \norm{\gfb} \rp ^2 - {Q^2 \norm{\grb}^2} / {R^2}}$
    \IF{$\bar{\kappa} > \bar{\kappa}_2$}
        \STATE \textbf{break}
    \ENDIF
    \IF{$\bar{\kappa} \le \bar{\kappa}_1$}
        \STATE $\dw \gets - \dfrac{R}{\norm{\grb}} \cdot \grb$
    \ELSE
        \STATE $\grbp \gets \grb - \dfrac{\bar{\kappa}}{\norm{\gfb}^2} \cdot \gfb$
        \STATE $\dw \gets \dfrac{Q}{\norm{\gfb}^2} \cdot \gfb - \sqrt{R^2 - \dfrac{Q^2}{\norm{\gfb}^2}} \cdot \dfrac{\grbp}{\norm{\grbp}}$
    \ENDIF
    \STATE $\vw_{t+1} \gets \vw_t + \dw$
\ENDFOR
\STATE \textbf{return} $\vw_t$
\end{algorithmic}
\end{algorithm}

\section{Experiments}
\label{sec:exp}
In this section, we demonstrate the effectiveness of \ours and the validity of the hardness measure $\kappa$.
We compare \primal and \reciprocal with various unlearning baselines, including fine-tuning on the retain data~(FT), gradient ascent on the forget data~(GA)~\citep{jang2023knowledge}, gradient difference~(GDiff)~\citep{liu2022backdoor}, KL minimization~(KL)~\citep{maini2024tofu} and SCRUB~\citep{Kurmanji2023-xm}.
Details on these algorithms can be found in App.~\ref{app:baselines}.

We consistently use $\Delta L_f$ and $-\Delta L_r$ to denote the change in forget quality (forget loss) and retain utility (negated retain loss).
We mainly evaluate the unlearning performance based on the trade-off between $\Delta L_f$ and $-\Delta L_r$.
Our plots show $5$ unlearning epochs, with markers indicating the evaluation at the end of each epoch.
The stopping criterion was not enforced in some of the experiments to illustrate results even for the hardest unlearning settings.

In practice, we observe that the gradient norms $\norm{\gr}$ and $\norm{\gf}$ are usually larger than $1$.
Hence, we adopt the gradient clipping technique such that $\norm{\gr}_{\max} = \norm{\gf}_{\max} = 1$.
We then choose $Q$ such that $Q < \eta \norm{\gr}_{\max} \norm{\gf}_{\max}$.
This ensures that the assumption of Theorem~\ref{theorem:opt-primal} is generally satisfied in practice.
The same is done for $U$.
Unless otherwise stated, our experiments set $Q = U = 0.5 \eta$ for \primal and \reciprocal, respectively.
The same learning rate of $\num{1e-4}$ is used for all unlearning algorithms.

\subsection{Computer Vision Task}
\label{sec:cv-exp}
We first evaluate \ours on a \emph{computer vision}~(CV) task.
To obtain the original model $\vw_0$, we randomly initialize a ResNet-20~\citep{he2016deep} model and train it on the CIFAR-10~\citep{krizhevsky2009learning} dataset.
Details of the CV unlearning experimental setup are in App.~\ref{app:cv_exp_setup}.

\textbf{Varying hardness by changing similarity mixing ratio.}
To construct unlearning problems with different levels of hardness, we vary the similarity between the forget and retain data using the similarity mixing ratio $\rho$.
Specifically, we first choose a random class $c$ which contains $n$ samples.
Given a similarity mixing ratio $\rho \in \lb 0, 1 \rb$, the forget data consists of $n - \floor{\rho n}$ randomly sampled data points from class $c$, and $\floor{\rho n}$ randomly sampled data points from the union of all other classes and data points in class $c$ that are not selected in the first step.
When $\rho = 0$, the forget data is dissimilar to the retain data and data points from class $c$ appear only in the forget data.
In contrast, when $\rho = 1$, the forget data is highly similar to the retain data, and both contain data points from all classes.
In our experiments, we use $\rho \in \lc 0, 0.25, 0.5, 0.75, 1 \rc$.
We use the default $\rho = 0$ in our experiments unless otherwise stated.
\begin{figure}[t]
    \centering
    \includegraphics[width=1\linewidth]{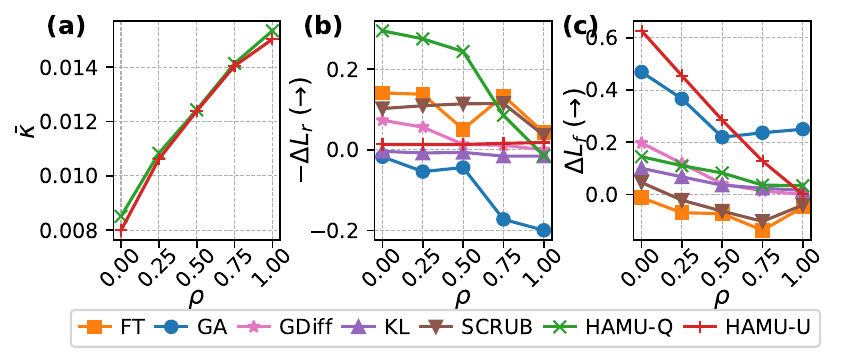}
    \caption{(a) Estimated hardness $\bar{\kappa}$ is strongly correlated with similarity mixing ratio $\rho$, with Pearson correlation coefficient of 0.994 and 0.986 for \primal and \reciprocal, respectively.
    (b, c) Higher $\rho$~(harder) results in a lower increase in $-\Delta L_r$ and $\Delta L_f$.}
    \label{fig:exp_cv_correlation}
\end{figure}

\textbf{Unlearning is harder when the forget and retain data are more similar.}
Fig.~\ref{fig:exp_cv_correlation}a shows that the estimate of our hardness measure, $\bar{\kappa}$, is highly correlated with the similarity mixing ratio $\rho$. 
Additionally, Fig.~\ref{fig:exp_cv_correlation}b-c shows that as unlearning hardness increases with $\rho$, the achievable improvements in retain utility $-\Delta L_r$ and forget quality $\Delta L_f$ are lower.
This verifies our intuition that unlearning becomes harder when forget data is more similar to retain data.
Notably, although the hardness measure is derived from \ours's formulation, the same trend is consistently observed across other unlearning algorithms~(Fig.~\ref{fig:exp_cv_correlation}b-c), suggesting that the similarity between the forget and retain data captured by our measure reflects a fundamental characteristic of machine unlearning.
\begin{figure}[t]
    \centering
    \includegraphics[width=1\linewidth]{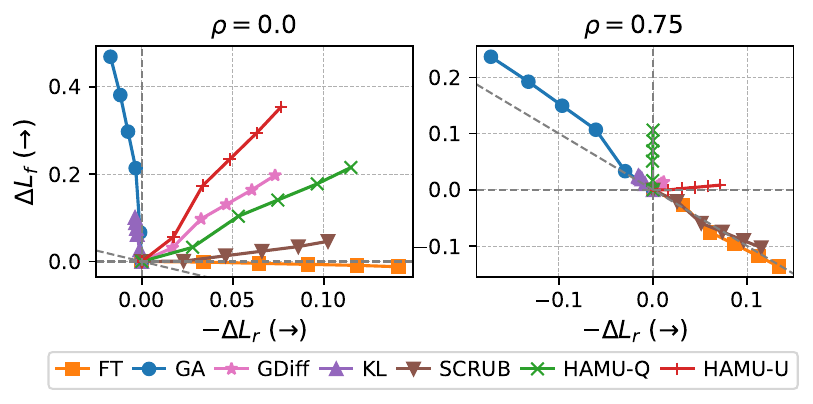}
    \caption{Comparison of \ours with baselines for unlearning scenarios of different levels of hardness.
    \ours outperforms baselines in both easy~($\rho = 0$) and hard~($\rho = 0.75$) scenarios.
    When unlearning becomes harder~(e.g.,~$\rho = 0.75$), most baseline methods fail to satisfy both objectives.}
    \label{fig:exp_cv_baseline}
\end{figure}

\textbf{\ours outperforms baselines, particularly when the problem is hard.}
In Fig.~\ref{fig:exp_cv_baseline}, we compare the unlearning trajectories of \ours against baselines for unlearning scenarios of different levels of hardness, controlled by the similarity mixing ratio $\rho$.
In these plots, all trajectories start from the origin, and the dotted diagonal line mimics a bad unlearning algorithm:
tracing the top-left of the dotted diagonal line~(e.g.,~GA, KL for $\rho=0.75$) indicates that model performance deteriorates on both the forget and retain data, resulting in collateral forgetting.
In contrast, tracing the bottom-right of the dotted diagonal line~(e.g.,~FT, SCRUB for $\rho=0.75$) indicates that model performance improves on both the forget and retain data, suggesting poor forget quality.
An ideal unlearning algorithm should achieve positive $\Delta L_f$ and $-\Delta L_r$, i.e.,~its unlearning trajectory should trend toward the top-right region.
In the easy scenario~($\rho = 0$ with dissimilar forget and retain data), GA and KL achieve a large $\Delta L_f$ with a negative $-\Delta L_r$,
FT and SCRUB achieve a large $-\Delta L_r$ with a negative $\Delta L_f$,
while \ours variants and GDiff desirably achieve large positive $\Delta L_f$ and $-\Delta L_r$, improving both objectives simultaneously.
However, in the hard scenario~($\rho = 0.75$ with similar forget and retain data), only \ours variants achieve non-negligible positive $\Delta L_f$ or $-\Delta L_r$ without hurting the other objective.
Additional experimental results for different levels of similarity mixing ratio $\rho$ are in App.~\ref{app:additional_baseline_cv}.
\begin{figure}[t]
    \centering
    \includegraphics[width=1\linewidth]{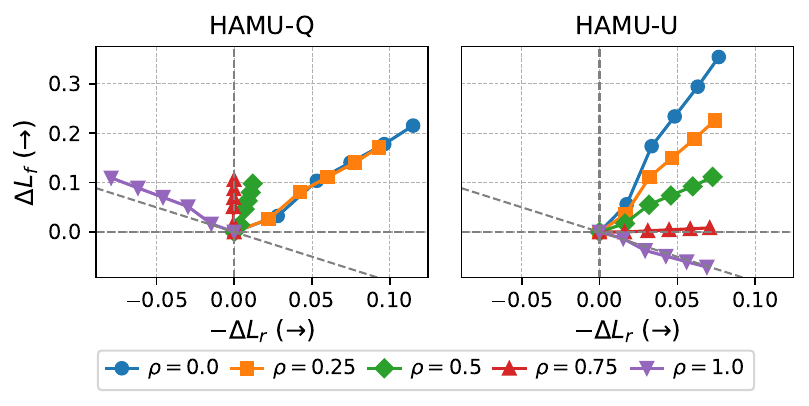}
    \caption{\primal enforces a positive $\Delta L_f$ while \reciprocal enforces a positive $-\Delta L_r$.}
    \label{fig:exp_cv_hardness}
\end{figure}

\textbf{\ours enforces the desired objectives.}
Fig.~\ref{fig:exp_cv_hardness} compares the unlearning trajectories of \primal and \reciprocal.
Across all $\rho$'s and epochs, we observe that \primal indeed enforces a positive $\Delta L_f$~(better forget quality), and \reciprocal enforces a positive $-\Delta L_r$~(better retain utility).
This is consistent with our formulation in Problem~\eqref{problem:opt-ldr} and Problem~\eqref{problem:opt-ldf}, respectively.
Fig.~\ref{fig:exp_cv_hardness} shows that as the forget data becomes more similar to the retain data~(higher $\rho$), $-\Delta L_r$ decreases for \primal and $\Delta L_f$ decreases for \reciprocal.
When $\rho=1$, unlearning becomes so hard that even \primal results in collateral forgetting and \reciprocal fails to unlearn the forget data.
Note that the stopping criterion was not enforced in these experiments to illustrate the results for these hard unlearning scenarios.
\begin{figure}[t]
    \centering
    \includegraphics[width=1\linewidth]{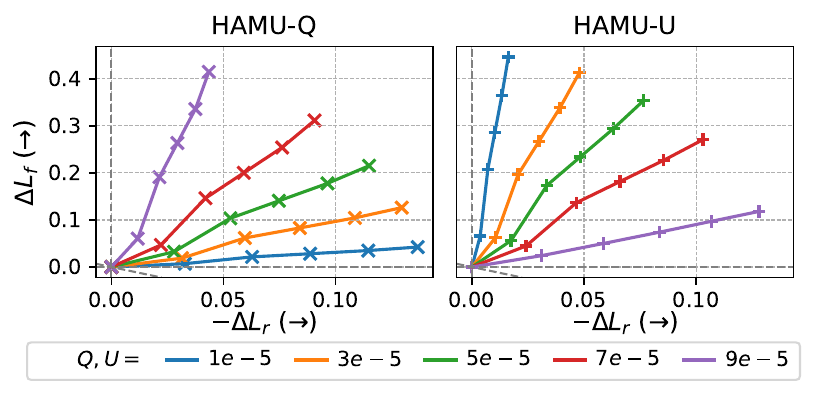}
    \caption{Unlearning trajectories of \primal and \reciprocal for different values of $Q$ and $U$ when $\rho=0$.}
    \label{fig:exp_cv_varying_Q_U}
\end{figure}

\textbf{Selection of constraints $Q$ and $U$ trades off between improvement in forget quality and retain utility.}
We illustrate the trade-off between $\Delta L_f$ and $-\Delta L_r$ for different values of $Q$ and $U$ in Fig.~\ref{fig:exp_cv_varying_Q_U}.
When using \primal, increasing $Q$ results in a better forget quality~(higher $\Delta L_f$), at the expense of a worse retain utility~(lower $-\Delta L_r$).
Conversely, when using \reciprocal, increasing $U$ results in a better retain utility, at the expense of a worse forget quality.
In practice, we leave the selection of $Q$ and $U$ to the user when using \ours based on their specific requirements.
\begin{figure}
    \centering
    \includegraphics[width=1\linewidth]{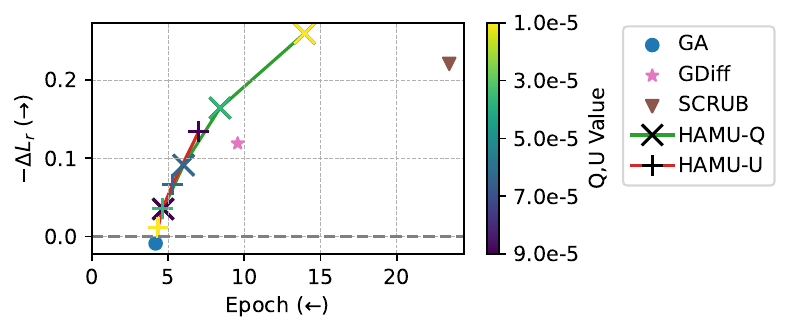}
    \caption{Plot of improvement in retain utility $-\Delta L_r$ against number of epochs required to reach a specified threshold for $L_f$.
    Methods that failed to reach this threshold have been excluded.}
    \label{fig:exp_cv_compute_trade_off}
\end{figure}

\textbf{Selection of constraints $Q$ and $U$ trades off between number of epochs and objective improvement.}
In Fig.~\ref{fig:exp_cv_compute_trade_off}, we plot $-\Delta L_r$ against the number of unlearning epochs required to reach a pre-specified target forget loss $L_f=1.0$.
We observe that higher $Q$ requires fewer epochs (i.e.,~lower computation cost) to reach the required forget quality, but leads to worse retain utility.
If more epochs can be afforded, the user can choose a smaller $Q$, in order to achieve a better retain utility.
Fig.~\ref{fig:exp_cv_compute_trade_off} also shows that the Pareto-front of \ours outperforms the baselines, with the Pareto-front of \primal overlapping with that of \reciprocal.

\textbf{Layer-wise update outperforms global update in unlearning performance.}
\begin{figure}[t]
    \centering
    \includegraphics[width=1\linewidth]{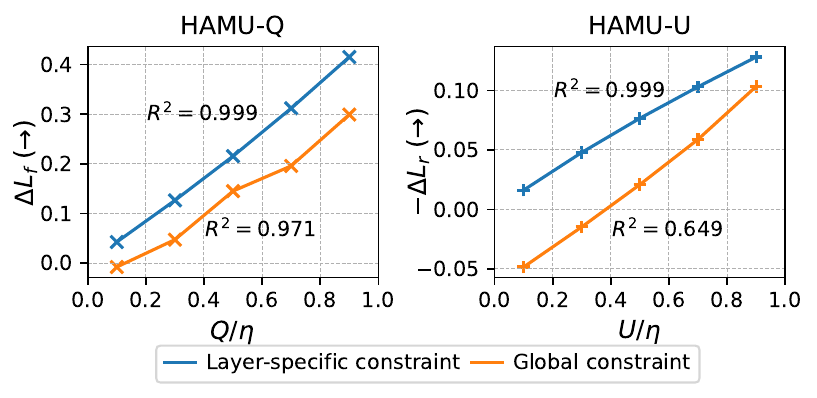}
    \caption{Plots of $\Delta L_f$ vs.~$Q/\eta$ and $-\Delta L_r$ vs.~$U/\eta$ under layer-wise and global constraint for \primal and \reciprocal, respectively.
    We assume linear proportionality through origin for $R^2$.}
    \label{fig:exp_cv_fitting_Q_U}
\end{figure}
In addition to the practical benefits discussed in Sec.~\ref{sec:layer_specific_constraint}, Fig.~\ref{fig:exp_cv_fitting_Q_U} shows that using layer-specific constraints in \ours weight updates achieves better forget quality and retain utility than using global constraint in the updates for \primal and \reciprocal, respectively.
Fig.~\ref{fig:exp_cv_fitting_Q_U} also shows that $\Delta L_f$ and $-\Delta L_r$ are undesirably negative for small $Q$ and $U$ under global constraint.
In addition, we observe an almost perfect linear proportional relationship~($R^2=0.999$) between the user-specified requirements, $Q$ or $U$, and the corresponding $\Delta L_f$ or $-\Delta L_r$ for \primal and \reciprocal, respectively.
This may be because the unlearning update is computed layer-wise, which allows a more accurate linear approximation of each layer's contribution to the overall loss, whereas the global update approach combines gradients across different layers and introduces additional nonlinearities.
\begin{figure}[t]
    \centering
    \includegraphics[width=1\linewidth]{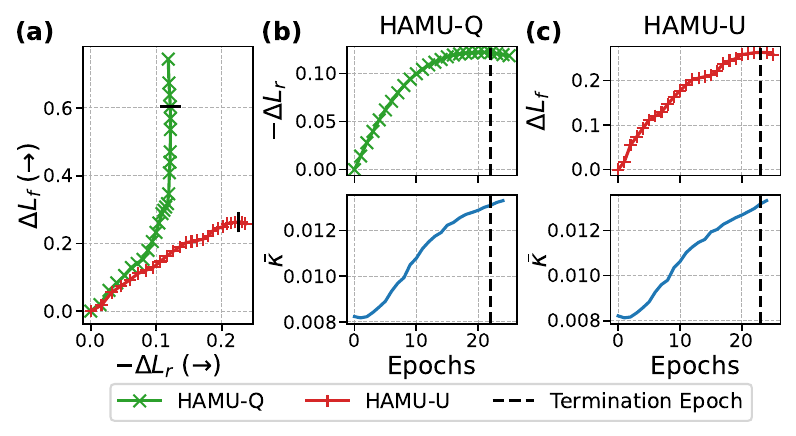}
    \caption{Unlearning trajectories for $25$ epochs at $\rho=0.5$, along with the estimated hardness $\bar{\kappa}$ at each epoch.
    Termination epoch refers to the epoch where the stopping criterion is met.}
    \label{fig:stopping_criterion}
\end{figure}

\textbf{Hardness increases as unlearning progresses.}
The bottom plots of Fig.~\ref{fig:stopping_criterion}b-c show that unlearning hardness increases with unlearning epochs.
This may be because as unlearning progresses, there is less room for maneuvering, and fewer update directions remain feasible for simultaneously improving both objectives.
The increase in hardness leads to a decrease in the rate of change for $-\Delta L_r$ and $\Delta L_f$ for \primal and \reciprocal, respectively, as shown in the top plots of Fig.~\ref{fig:stopping_criterion}b-c.

\textbf{Stopping criterion terminates \ours when collateral forgetting is inevitable.}
Fig.~\ref{fig:stopping_criterion}~b-c shows that once $-\Delta L_r$ and $\Delta L_f$ start to drop for \primal and \reciprocal, respectively, our stopping criterion terminates unlearning.
Note that this can happen at an epoch slightly after the turning point, as shown in the top plot of Fig.~\ref{fig:stopping_criterion}b, likely due to the linear approximation errors in \ours.
In practice, a model owner performing unlearning can avoid this by terminating unlearning when the stopping criterion is close to being met,
i.e.,~when $\bar{\kappa} > \bar{\kappa}_2 - \epsilon$ for some small $\epsilon > 0$.
In App.~\ref{app:stopping_criterion}, we discuss how to adapt the stopping criterion for layer-specific constraints.

\subsection{LLM Question Answering Task}
We additionally evaluate \ours on LLM \emph{question answering}~(QA) task, a more recent and larger scale unlearning setup.
We use the \texttt{Llama-2-7b-chat-hf} model\footnote{\url{https://huggingface.co/meta-llama/Llama-2-7b-chat-hf}} trained on the unlearning WaterDrum-TOFU dataset~\citep{lu2025waterdrum} which has subsets with different levels of similarity.
For our experiments, we use the ``No duplicate'' and ``Semantic duplicate'' subsets of WaterDrum-TOFU, where the latter includes paraphrased versions of the forget data in the retain data.
We refer to the settings as ``Semantically dissimilar'' and ``Semantically similar'', respectively.
Details on the training process are in App.~\ref{app:llm_unlearning_setup}.
We also include results for additional baselines and evaluation metrics such as model utility and \emph{membership inference attack}~(MIA) success rate in App.~\ref{app:additional_baselines_metrics}.
\begin{figure}
    \centering
    \includegraphics[width=1\linewidth]{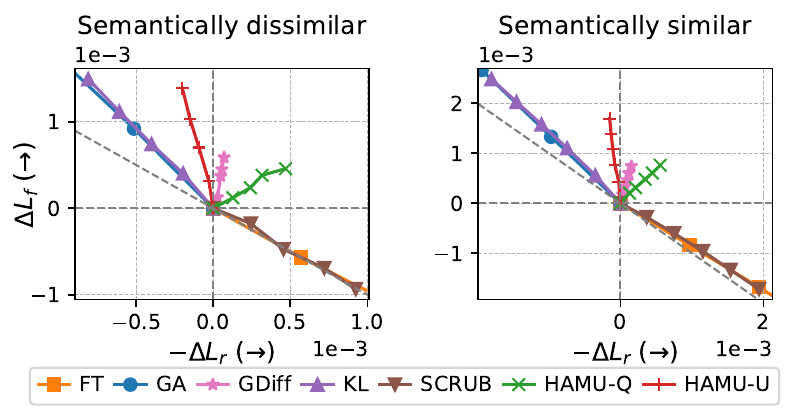}
    \caption{Comparison of \ours with baselines for different levels of semantic similarity on LLM QA task.
    The plot is zoomed in, with parts of FT and GA~(which perform poorly) cropped out.}
    \label{fig:exp_llm_hardness}
\end{figure}

\textbf{Unlearning is harder when the forget and retain data are more semantically similar.}
The average estimated hardness for the ``semantically similar'' setting~($\bar{\kappa} = \num{6.1e-4}$) is higher than that of the ``semantically dissimilar'' setting~($\bar{\kappa} = \num{4.0e-4}$), consistent with our intuition that it is easier to unlearn the forget data that is less similar to the retain data.
Fig.~\ref{fig:exp_llm_hardness} shows that \primal is able to improve both forget quality and retain utility for both settings.

\textbf{LLM unlearning is hard.}
In Fig.~\ref{fig:exp_llm_hardness}, only \primal and GDiff can improve both forget quality and retain utility simultaneously.
Comparatively, \primal is able to achieve a greater increase in forget quality and retain utility than GDiff.
Most baselines perform poorly, tracing close to the dotted diagonal which mimics a bad unlearning algorithm, even in the easier setting with semantically dissimilar forget and retain data.
Note that \reciprocal achieves a slightly negative $-\Delta L_r$ in Fig.~\ref{fig:exp_llm_hardness}, due to poor linear approximation,
i.e.,~the positive retain utility improvement constraint is satisfied by the approximated problem but not the original problem.
Fig.~\ref{fig:exp_learning_rate} shows that with a smaller learning rate, the retain objective improves from violating to satisfying the constraint due to better approximation.
\begin{figure}
    \centering
    \hspace*{-0.4cm}    
    \includegraphics[scale=.55]{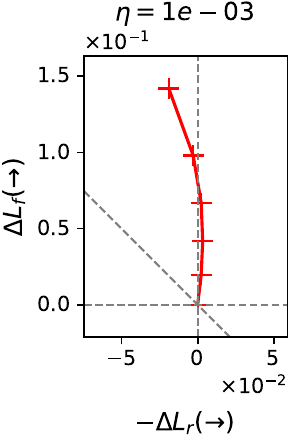}
    \hspace*{.8cm}
    \includegraphics[scale=.55]{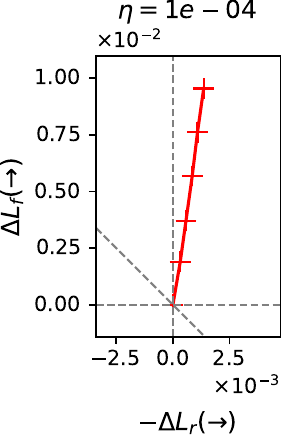}
    \caption{Comparison of \reciprocal unlearning trajectories for different learning rates $\eta$, with the same $U=0.6\eta$, under the semantically dissimilar LLM experimental setup.}
    \label{fig:exp_learning_rate}
\end{figure}

\section{Conclusion}
We introduce \ours, a novel hardness-aware multi-objective unlearning algorithm that adjusts updates based on per-iteration hardness and terminates when collateral forgetting is reached. 
Through theoretical analysis of a constrained optimization problem, \ours optimally reduces the cost of forgetting while satisfying the forget quality constraint.
Naturally arising from the analysis, we propose using the gradient dot product as the hardness measure to quantify the difficulty of reconciling the unlearning objectives. 
This proposed measure theoretically predicts the unlearning performance, formalizing the intuition that more similar data are harder to unlearn.
Our experiments demonstrate \ours's superior performance compared to existing baselines, and its applicability and scalability to large, non-convex models.
We believe \ours represents a significant step toward developing unlearning algorithms that are both practical and theoretically-grounded, enabling broader adoption of unlearning in real-world scenarios.
We discuss the relevant literature in App.~\ref{app:related_work}.

\section*{Acknowledgements}
This research is supported by the National Research Foundation, Singapore under its AI Singapore Programme (AISG Award No: AISG$3$-RP-$2022$-$029$).
This research/project is supported by the National Research Foundation, Singapore under its National Large Language Models Funding Initiative (AISG Award No: AISG-NMLP-$2024$-$001$).
Any opinions, findings and conclusions or recommendations expressed in this material are those of the author(s) and do not reflect the views of National Research Foundation, Singapore.
Jiangwei Chen is supported by the Institute for Infocomm Research (I\textsuperscript{2}R), Agency for Science, Technology and Research (A*STAR).
Xinyuan Niu is supported by the Centre for Frontier AI Research (CFAR), Agency for Science, Technology and Research, Singapore (A*STAR).
We would like to thank the anonymous reviewers and the AC for their helpful and constructive feedback.

\section*{Impact Statement}
This paper presents work whose goal is to advance the field of Machine Learning.
There are many potential societal consequences of our work, none which we feel must be specifically highlighted here.

\bibliography{icml2026}
\bibliographystyle{icml2026}

\newpage
\appendix
\onecolumn

\section{Related Work}
\label{app:related_work}
One common objective of machine unlearning is to approximate the retrained model~\citep{Georgiev2025-aj, Nguyen2022-it}, which is equivalent to minimizing the retain loss~\citep{Neel2021-bv, Bourtoule2021-tu}.
Another prevalent objective is to increase the forget loss~\citep{Halimi2022-aj, Zhang2024-yj}.
As neither of these objectives alone suffices to ensure a model with both strong forget and retain performance~\citep{wang2025rethinking, fan2025simplicity}, multi-objective unlearning~\citep{jin-etal-2025-unlearning} aims to optimize both objectives simultaneously.

One straightforward approach to multi-objective unlearning is to directly optimize a weighted combination of the loss functions~\citep{Kurmanji2023-xm, duan2025ready2unlearn}.
Other approaches such as GDR-GMA~\citep{Lin2024-wi} and GRU~\citep{wang2025gru} combine the retain gradient with the forget gradient to compute the unlearning update.
However, neither approach guarantees that a certain objective is achieved, nor do they adjust the update based on the hardness to optimally reduce the cost of forgetting.

Reconciling the retain and forget objectives in machine unlearning can be viewed as a specific instance of multi-objective optimization, and therefore multi-task learning approaches such as PCGrad~\citep{yu2020gradient} and CAGrad~\citep{NEURIPS2021_9d27fdf2} can be adapted for multi-objective unlearning.
Our approach is closely related to the classical $\epsilon$-constraint method~\citep{Haimes1971} but differs from standard approaches in that, instead of running gradient descent on the global constrained optimization problem, we derive a per-iteration, first-order update rule that enforces a minimum improvement in forget quality while minimizing retain utility degradation.
The novelty lies in this local, gradient-level formulation~(Sec.~\ref{sec:theory}), as well as in the hardness characterization and its interpretation, which directly inform the design of our algorithm~(Sec.~\ref{sec:algo}) and enable efficient application to large-scale, non-convex models.

Unlearning methods with theoretical guarantees~\citep{Guo2020-pp, Ullah2021-df} typically require strong assumptions, such as strongly convex loss functions and the existence of a unique minimizer~\citep{Wu2020-rf, Allouah2025-en}, limiting their practicality. 
Existing works relevant to unlearning hardness~\citep{Zhao2024-wq, Sarvmaili2024-bp} are largely heuristic-based and do not establish a formal connection between the proposed hardness measures and unlearning performance.

\section{Proofs and Derivations}
\label{app:proof}
\subsection{First-Order Loss Approximation}
\label{app:first-order-loss-approximation}
For any $\vz_r = (\vx_r, y_r) \in D_r$, applying Taylor expansion to $L_{t+1}(\vz_r)$, we have
\begin{equation*}
    L_{t+1}(\vz_r) = L_t(\vz_r) + \Delta \vw^\top\nabla L_t(\vz_r) + \underbrace{\frac12\Delta \vw^\top \mH \Delta \vw}_{E},  
\end{equation*}
where $E$ is the approximation error term, and $\mH = \nabla^2 L(\vp(\vx_r; \vw'), y_r)$ is the Hessian at some intermediate model $\vw'=\vw_t+c\Delta \vw$ for some scalar $c\in(0,1)$.
As the model update $\Delta \vw$ is induced by a single gradient descent step on $\vz_r$, substituting $\Delta \vw=-\eta\nabla L_t(\vz_r)$ gives
\begin{equation*}
    \Delta_{\vz_r} L_t(\vz_r) = L_{t+1}(\vz_r) - L_t(\vz_r) = -\eta \norm{\nabla L_t(\vz_r)}^2 + E.
\end{equation*}
The approximation quality can be measured by the ratio between the absolute values of the error term and linear term:
\begin{equation*}
\frac{|E|}{\abs{\Delta \vw^\top\nabla L_t(\vz_r)}} 
= \left|\frac{\frac12\Delta \vw^\top \mH \Delta \vw}{-\eta\norm{\nabla L_t(\vz_r)}^2}\right|
\leq\left|\frac{\frac12\eta^2\norm{\nabla L_t(\vz_r)}^2\eigenmax}{-\eta\norm{\nabla L_t(\vz_r)}^2}\right|=\frac12\eta\eigenmax,
\end{equation*}
where $\eigenmax$ is the maximum eigenvalue of $\mH$.
Empirical evidence~\citep{NEURIPS2018_905056c1, bui2024newton} has shown that the eigenvalues of Hessian typically falls below 200 for neural networks on standard datasets such as CIFAR-10.
Combined with a small learning rate of \num{1e-4} used in our experiments~(Sec.~\ref{sec:exp}), the error ratio falls below 1\%.
Therefore, the second-order derivative component is negligible in practice and using the first-order derivative component is sufficient in each iteration.
Furthermore, we are redoing the approximation at each iteration, thereby ensuring consistent approximation along the unlearning trajectory.
Since our formulation in Problem~\eqref{problem:opt-ldr} relies on approximation, the derived algorithms can be applied to any non-convex model as long as a sufficiently small learning rate is chosen to ensure approximation quality, the theoretical results also only hold up to the extent of approximation.
Despite this, our methods still perform well on non-convex models as demonstrated in our experiments~(Sec.~\ref{sec:exp}).

\subsection{The Forget-Constrained Problem}
\textbf{Theorem~\ref{theorem:opt-primal} and Proposition~\ref{proposition:update-ldr} are proven below by solving a convex optimization problem using KKT conditions.}

As Problem~\eqref{problem:opt-ldr} enforces a constraint on the forget quality, we refer to it as the forget-constrained problem.
Alternatively, we can enforce a constraint on the retain utility, we discuss this alternative retain-constrained problem in App.~\ref{app:retain-constrained}.
To solve Problem~\eqref{problem:opt-ldr}, first note that it is feasible when $Q \leq R \norm{\gf}$.
We assume this feasibility condition holds henceforth.
The Lagrangian of Problem~\eqref{problem:opt-ldr} is given by
\begin{equation*}
    \cL(\dw, \alpha, \beta) = \gr \cdot \dw + \alpha (Q - \gf \cdot \dw) + \beta (\dw \cdot \dw - R^2).
\end{equation*}
Since Problem~\eqref{problem:opt-ldr} is a convex optimization problem with convex inequality constraints, we can find its optimal value by solving the KKT conditions:
\begin{equation*}
    \begin{aligned}
        \textbf{Stationarity Condition}  &\qquad \pd{\cL}{\dw} = \gr - \alpha \gf + 2 \beta \dw = \vzero \\
        \textbf{Complementary Slackness} &\qquad \alpha (Q - \gf \cdot \dw) = 0 \\
                                         &\qquad \beta (\dw \cdot \dw - R^2) = 0 \\
        \textbf{Primal Feasibility}      &\qquad Q - \gf \cdot \dw \leq 0 \\
                                         &\qquad \dw \cdot \dw - R^2 \leq 0 \\
        \textbf{Dual Feasibility}        &\qquad \alpha \geq 0, \beta \geq 0.
    \end{aligned}
\end{equation*}
We discuss all possible cases corresponding to different values of the dual variables $\alpha$ and $\beta$.
Note that when the dual variables are positive, the corresponding primal feasibility conditions must bind. 
Specifically, when $\alpha > 0$, $\gf \cdot \dw = Q$; when $\beta > 0$, $\norm{\dw} = R$.

\fbox{\textbf{Case 1}: $\alpha = 0, \beta > 0$.} 

When $\alpha = 0$, the stationarity condition simplifies to 
\begin{equation*}
    \gr + 2 \beta \dw = \vzero, 
\end{equation*}
which implies
\begin{equation}
\label{eq:primal-case1-update}
    \dw = -\frac{1}{2\beta} \cdot \gr.
\end{equation}
To find the value of $\beta$, we equate the norm of the above expression to $R$ since $\norm{\dw} = R$ when $\beta > 0$:
\begin{equation*}
    \norm{\dw}^2 = \frac{1}{4\beta^2} \norm{\gr}^2 = R^2.
\end{equation*}
Solving for $\beta$, we obtain
\begin{equation*}
    \beta = \frac{\norm{\gr}}{2R}.
\end{equation*}
Substitute the value of $\beta$ into Eq.~\eqref{eq:primal-case1-update}, we obtain the direct update 
\begin{equation*}
    \dw^* = -\frac{R}{\norm{\gr}} \cdot \gr,
\end{equation*}
which gives the optimal value
\begin{equation*}
    F_r^* = \gr \cdot \dw^* = -R \norm{\gr}.
\end{equation*}
To show that the direct update corresponds to the case when $\kappa \leq \kappa_1$, we check that $\dw^*$ should satisfy primal feasibility:
\begin{equation*}
    \gf \cdot \dw^* \geq Q \implies \gr \cdot \gf \leq -\frac{Q\norm{\gr}}{R},
\end{equation*}
which is exactly $\kappa \leq \kappa_1$.

\fbox{\textbf{Case 2}: $\alpha > 0, \beta > 0$.}

We first decompose $\gr$ into components that are parallel and orthogonal to $\gf$.
Let $\gfu \triangleq \gf / \norm{\gf}$ be the unit vector of $\gf$,
then $\gr = x \cdot \gfu + \perpen{\gr}$, 
where $x = \gr \cdot \gfu$ is the magnitude of the parallel component, 
and $\perpen{\gr} \perp \gf$ is the orthogonal component.
Similarly we can decompose $\dw$ into $\dw = y \cdot \gfu + \perpen{\dw}$, where $\perpen{\dw} \perp \gf$.
When $\alpha > 0$, $\gf \cdot \dw = Q$, substitute the decomposed $\dw$ we obtain
\begin{equation*}
    \gf \cdot (y \cdot \gfu + \perpen{\dw}) = Q,
\end{equation*}
which gives the magnitude of $\dw$'s parallel component to $\gf$,
\begin{equation*}
    y = \frac{Q}{\norm{\gf}}.
\end{equation*}
By Pythagorean theorem, the magnitude of $\dw$'s orthogonal component is
\begin{equation*}
    \norm{\perpen{\dw}}^2 = \norm{\dw}^2 - y^2 = R^2 - \frac{Q^2}{\norm{\gf}^2}.
\end{equation*}
From the stationarity condition we know $\dw \in \operatorname{span} \{\gr, \gf\}$, thus $\perpen{\dw}$ is parallel to $\perpen{\gr}$.
As we want to minimize $\gr \cdot \dw$, we will choose $\perpen{\dw}$ in the opposite direction of $\perpen{\gr}$ so that $\perpen{\dw} \cdot \perpen{\gr} < 0$. 
Hence, the rectified update is
\begin{equation}
\label{eq:case2update}
    \dw^* = \frac{Q}{\norm{\gf}^2} \cdot \gf - \sqrt{R^2 - \frac{Q^2}{\norm{\gf}^2}} \cdot \frac{\perpen{\gr}}{\norm{\perpen{\gr}}}, 
\end{equation}
where $\perpen{\gr} = \gr - x \cdot \gfu = \gr - \frac{\gr \cdot \gf}{\norm{\gf}^2} \cdot \gf$. 
Thus, the optimal value $F_r^*$ is 
\begin{align}
    \gr \cdot \dw^* & = \frac{Q}{\norm{\gf}^2} \lp \gr \cdot \gf \rp - \norm{\perpen{\gr}} \sqrt{R^2 - \frac{Q^2}{\norm{\gf}^2}}
    \label{eq:case2opt} \\
    & = \frac{Q}{\norm{\gf}^2} \lp \gr \cdot \gf \rp - \sqrt{\lp \norm{\gr}^2 - \frac{(\gr \cdot \gf)^2}{\norm{\gf}^2} \rp \lp R^2 - \frac{Q^2}{\norm{\gf}^2} \rp}. \notag
\end{align}
Next, we will show how \textbf{Case 2 corresponds precisely to the scenario in which the boundary condition fails},
i.e.,~the rectified update is applied when $\kappa > \kappa_1$,
thus \textbf{justifying our use of hardness measure to inform the unlearning update}.

By decomposing the stationarity condition in the direction parallel to $\gf$ and the direction orthogonal to $\gf$,
\begin{equation*}
    x - \alpha \norm{\gf} + 2 \beta y = 0
    \quad \text{and} \quad
    \perpen{\gr} + 2 \beta \perpen{\dw} = \vzero,
\end{equation*}
we obtain the expressions for the dual variables:
\begin{equation*}
    \alpha = \frac{x + 2 \beta y}{\norm{\gf}}
    \quad \text{and} \quad 
    \beta = \frac{\norm{\perpen{\gr}}}{2 \norm{\perpen{\dw}}} .
\end{equation*}
From $\alpha > 0$, and substituting the value of $\beta$, we have
\begin{align}
    x + 2 \beta y &> 0 \notag \\
    x + 2 \cdot \frac{\norm{\perpen{\gr}}}{2 \norm{\perpen{\dw}}} \cdot y &> 0 \notag \\
    x \norm{\perpen{\dw}} + y \norm{\perpen{\gr}} &> 0. \label{ineq:primal-case2-condition}
\end{align}
If $\gr \cdot \gf > 0$, then clearly $\kappa = \gr \cdot \gf > -Q \norm{\gr} / R = \kappa_1$ and we are done.

If $\gr \cdot \gf < 0$, substituting the values of $x$, $y$ and norms $\norm{\perpen{\dw}}$, $\norm{\perpen{\gr}}$ into Ineq.~\eqref{ineq:primal-case2-condition},
\begin{align*}
    y \norm{\perpen{\gr}} &> - x \norm{\perpen{\dw}} \\
    \frac{Q}{\norm{\gf}} \sqrt{\norm{\gr}^2 - \frac{(\gr \cdot \gf)^2}{\norm{\gf}^2}} &> - \frac{\gr \cdot \gf}{\norm{\gf}} \sqrt{R^2 - \frac{Q^2}{\norm{\gf}^2}} \\
    \frac{Q^2}{\norm{\gf}^2} \lp \norm{\gr}^2 - \frac{(\gr \cdot \gf)^2}{\norm{\gf}^2} \rp &> \frac{(\gr \cdot \gf)^2}{\norm{\gf}^2} \lp R^2 - \frac{Q^2}{\norm{\gf}^2} \rp \\
    Q^2 \norm{\gr}^2 &> R^2 (\gr \cdot \gf)^2 \\
    Q \norm{\gr} &> - R \lp \gr \cdot \gf \rp \\
    \gr \cdot \gf &> - \frac{Q\norm{\gr}}{R}.
\end{align*}
Thus, Case 2 is precisely defined by $\gr \cdot \gf > - Q \norm{\gr} / R$, which is exactly $\kappa > \kappa_1$.

\textbf{For Cases 3 and 4, we first solve for the optimal values and updates, then show they are degenerate cases of Case 2.}

\fbox{\textbf{Case 3}: $\alpha = 0, \beta = 0$.}

When $\beta = 0$, the stationarity condition implies $\gr = \alpha \cdot \gf$.
Further, $\alpha = 0$ implies $\gr = \vzero$ so $\gr \cdot \dw = 0$ for any feasible $\dw$.
Hence, we will choose the minimum-norm solution that satisfies primal feasibility, 
i.e.,~$\dw^* = ( {Q}/{\norm{\gf}^2} ) \cdot \gf$.
The optimum $F_r^*$ is 0 since $\gr = \vzero$.

\fbox{\textbf{Case 4}: $\alpha > 0, \beta = 0$.}

From $\alpha > 0$, the primal feasibility must bind: $\gf \cdot \dw = Q$,
which puts $\dw$ at the boundary of the primal feasibility condition.
We will again choose the minimum-norm solution $\dw^* = ( {Q}/{\norm{\gf}^2} ) \cdot \gf$, 
then $\gr \cdot \dw^* = \kappa Q / {\norm{\gf}^2}$.

To show that the Cases 3 and 4 are subsumed by Case 2 ($\kappa > \kappa_1$), we need to show
\begin{itemize}
    \item $\gr \cdot \gf > -Q \norm{\gr} / R$ is satisfied under Cases 3 and 4;
    \item Consistency with Eq.~\eqref{eq:case2update} for $\dw^*$ under Cases 3 and 4;
    \item Consistency with Eq.~\eqref{eq:case2opt} for $\gr \cdot \dw^*$ under Cases 3 and 4.
\end{itemize}
In Case 3, $\gr \cdot \gf = 0 > -Q \norm{\gr} / R$.
Also, substituting $\gr = \vzero$ into the expression for $\perpen{\gr}$ leads to $\perpen{\gr} = \vzero$, so $\dw^*$ in Case 2 degenerates to $( {Q}/{\norm{\gf}^2} ) \cdot \gf$.
The optimal value in Case 2 also evaluates to $0$ when $\gr = \vzero$.

In Case 4, from $\gr = \alpha \cdot \gf$ we have $\alpha = (\gr \cdot \gf) / \norm{\gf}^2$.
Then $\alpha > 0$ implies $\gr \cdot \gf > 0 > -Q \norm{\gr} / R$.
The optimal update and value evaluate to $( {Q}/{\norm{\gf}^2} ) \cdot \gf$ and $(\gr \cdot \gf)Q / \norm{\gf}^2$, respectively, since $\perpen{\gr} = \vzero$ given the collinearity between $\gr$ and $\gf$.

Therefore, \textbf{our update rule and optimal value encompass all possible cases of Problem~\eqref{problem:opt-ldr}}.

\subsection{Relationship Between Unlearning Performance and the Hardness Measure}
\textbf{The proof of Corollary~\ref{corollary:hardness} is given below.}
\begin{proof}
    When $\kappa \leq \kappa_1$, the function is constant. 
    When $\kappa > \kappa_1$, taking the derivative of $F_r^*$ with respect to $\kappa$ yields
    \begin{equation*}
        \frac{\ud F_r^*}{\ud \kappa} = \frac{Q}{\norm{\gf}^2} + \frac{\kappa \sqrt{R^2 - Q^2 / \norm{\gf}^2}}{\norm{\gf}^2 \sqrt{\norm{\gr}^2 - \kappa^2 / \norm{\gf}^2}}.
    \end{equation*}
    Simplify the derivative further using $\kappa = \norm{\gr} \norm{\gf} \cos\theta$, where $\theta \in [0, \pi]$ is the angle between $\gr$ and $\gf$,
    \begin{equation*}
        \frac{\ud F_r^*}{\ud \kappa} = \frac{1}{\norm{\gf}^2} \lp Q + \cot\theta \sqrt{R^2 \norm{\gf}^2 - Q^2} \rp.
    \end{equation*}
    At the threshold $\kappa = \kappa_1 = - Q \norm{\gr} / R$, the derivative evaluates to $0$. 
    When $\kappa$ increases, for fixed $\norm{\gr}$ and $\norm{\gf}$, $\cos\theta$ increases, $\theta$ decreases, $\cot\theta$ increases.
    Hence, the derivative is positive and the function is monotonically increasing.
\end{proof}

\subsection{Unavoidable Collateral Forgetting~(First-Order)}
\label{app:cf-first}
\textbf{Theorem~\ref{theorem:cf-first-order} is proven below.}

Assume $Q \leq R \norm{\gf}$.
Problem~\eqref{problem:impossible-first-order} is infeasible if $F_r^*$~(defined in Eq.~\eqref{eq:opt-ldr}) is greater than $0$. 
When $\kappa \leq \kappa_1$, $F_r^* = - R \norm{\gr} < 0$ so Problem~\eqref{problem:impossible-first-order} is feasible.
When $\kappa > \kappa_1$, the problem is infeasible if
\begin{align}
    \frac{\kappa Q}{\norm{\gf}^2} &> \sqrt{\lp \norm{\gr}^2 - \frac{(\gr \cdot \gf)^2}{\norm{\gf}^2} \rp \lp R^2 - \frac{Q^2}{\norm{\gf}^2} \rp} \label{ineq:impossible-first-order} \\
    \kappa^2 Q^2  &> \lp \norm{\gr}^2 \norm{\gf}^2 - (\gr \cdot \gf)^2 \rp \lp R^2 \norm{\gf}^2 - Q^2 \rp \notag \\
    \kappa^2 &> \norm{\gr}^2 \norm{\gf}^2 - \frac{Q^2 \norm{\gr}^2}{R^2} \notag \\
    \kappa &> \sqrt{\norm{\gr}^2 \norm{\gf}^2 - \frac{Q^2 \norm{\gr}^2}{R^2}} \label{ineq:cf-threshold},
\end{align}
where Ineq.~\eqref{ineq:cf-threshold} is because $\kappa > 0$, which is implied by Ineq.~\eqref{ineq:impossible-first-order}.
This gives the local collateral forgetting condition.

When $\kappa < 0$, Ineq.~\eqref{ineq:impossible-first-order} trivially fails; 
when $0 < \kappa \leq \sqrt{\norm{\gr}^2 \norm{\gf}^2 - \frac{Q^2 \norm{\gr}^2}{R^2}}$, Ineq.~\eqref{ineq:impossible-first-order} also fails.
Therefore, the feasibility condition for Problem~\eqref{problem:impossible-first-order} is
\begin{equation*}
    \kappa \leq \sqrt{\norm{\gr}^2 \norm{\gf}^2 - \frac{Q^2 \norm{\gr}^2}{R^2}} .
\end{equation*}
When $\kappa > 0$, let $\theta \in [0, \pi / 2]$ be the angle between $\gr$ and $\gf$, we can further simplify the feasibility condition:
\begin{align*}
    \kappa = \gr \cdot \gf &\leq \sqrt{\norm{\gr}^2 \norm{\gf}^2 - \frac{Q^2 \norm{\gr}^2}{R^2}} \\
    \frac{\gr \cdot \gf}{\norm{\gr} \norm{\gf}} &\leq \frac{1}{\norm{\gr} \norm{\gf}} \sqrt{\norm{\gr}^2 \norm{\gf}^2 - \frac{Q^2 \norm{\gr}^2}{R^2}} \\
    \cos \theta &\leq \sqrt{1 - \frac{Q^2}{R^2 \norm{\gf}^2}} \\
    \frac{Q^2}{R^2 \norm{\gf}^2} &\leq 1 - \cos^2 \theta \\
    Q^2 &\leq R^2 \norm{\gf}^2 \sin^2 \theta \\
    Q &\leq R \norm{\gf} \sin \theta .
\end{align*}
Problem~\eqref{problem:impossible-first-order} is always feasible when $\kappa \leq 0$; 
when $\kappa > 0$, let $\theta \in [0, \pi / 2]$ be the angle between $\gr$ and $\gf$, then Problem~\eqref{problem:impossible-first-order} is feasible if $Q \leq R \norm{\gf} \sin\theta$, which is stricter than the feasibility condition of Problem~\eqref{problem:opt-ldr} due to the additional constraint on retain utility.

\subsection{Unavoidable Collateral Forgetting~(Second-Order)}
\label{app:second-order}
In this section, we discuss the local collateral forgetting condition in the second-order formulation.
Identifying this condition requires solving the second-order formulation of Problem~\eqref{problem:opt-ldr}, 
which also yields an update rule but remains impractical due to assumptions and computational cost involving the Hessian.
We find that it is also impossible to avoid collateral forgetting in the second-order formulation under certain scenarios.
In fact, the hardness measure again plays an important role in determining whether we can avoid local collateral forgetting in machine unlearning.
We show that \textbf{when the problem becomes too hard~($\kappa = \gr \cdot \gf$ exceeds a certain threshold), collateral forgetting is unavoidable}.

We formulate the problem similar to Problem~\eqref{problem:impossible-first-order}, but better approximates $\Delta L_t(D_r)$ with a second-order approximation, while keeping first-order approximation for $\Delta L_t(D_f)$ to ensure we can still obtain the solution in analytical form.
Assume the Hessian on the retain data $\Hr \triangleq \nabla_{\vw}^2 L_t(D_r)$ is positive definite, we wish (or prove impossible) to find $\dw$ such that the following constraints are simultaneously satisfied:
\begin{equation} 
\label{problem:impossible-second-order}
    \gr \cdot \dw + \frac{1}{2} \dw^\top \Hr \dw \leq 0
    \quad \text{and} \quad
    \gf \cdot \dw \geq Q.
\end{equation}
The first constraint requires the retain utility to not drop after the update, while the second constraint requires the forget quality to exceed a threshold $Q > 0$.
Note that unlike in Problem~\eqref{problem:impossible-first-order}, we do not constrain the maximum norm of the weight update, so if the problem is infeasible, then collateral forgetting is unavoidable for all possible weight updates $\dw$.
To find the infeasibility condition, we consider the following convex optimization problem, which is the second-order formulation of Problem~\eqref{problem:opt-ldr}:
\begin{equation} 
\label{problem:convex-second-order}
    \begin{aligned}
        & \underset{\dw}{\text{minimize}}  && F(\dw) \triangleq \gr \cdot \dw + \frac{1}{2} \dw^\top \Hr \dw \\
        & \text{subject to}                && \gf \cdot \dw \geq Q .
    \end{aligned}
\end{equation}
If the constraint in Problem~\eqref{problem:convex-second-order} is satisfied, and the minimum objective value $F(\dw^*) > 0$, then Problem~\eqref{problem:impossible-second-order} is infeasible.
This establishes the local collateral forgetting condition in the second-order formulation.
The Lagrangian of Problem~\eqref{problem:convex-second-order} is given by 
\begin{equation*}
    \cL(\dw, \gamma) = \gr \cdot \dw + \frac{1}{2} \dw^\top \Hr \dw + \gamma \lp Q - \gf \cdot \dw \rp .
\end{equation*}
Since Problem~\eqref{problem:convex-second-order} is a convex optimization problem with convex inequality constraints, we can find its optimal value by solving the KKT conditions:
\begin{equation*}
    \begin{aligned}
        \textbf{Stationarity Condition}  &\qquad \pd{\cL}{\dw} = \gr + \Hr \dw - \gamma \gf = \vzero \\
        \textbf{Complementary Slackness} &\qquad \gamma (Q - \gf \cdot \dw) = 0 \\
        \textbf{Primal Feasibility}      &\qquad Q - \gf \cdot \dw \leq 0 \\
        \textbf{Dual Feasibility}        &\qquad \gamma \geq 0.
    \end{aligned}
\end{equation*}
From the stationarity condition we have
\begin{equation} 
\label{eq:update-second-order}
    \dw^* = \gamma \Hr^{-1} \gf - \Hr^{-1} \gr .
\end{equation}
\fbox{\textbf{Case 1}: $\gamma = 0$.}

Then $\dw^* = - \Hr^{-1} \gr$.
We can verify that $\gr \cdot \dw^* + \frac{1}{2} \dw^{*\top} \Hr \dw^* \leq 0$ is always true:
\begin{equation*}
    \begin{aligned}
        & \gr \cdot \dw^* + \frac{1}{2} \dw^{*\top} \Hr \dw^* \\
        ={} & \gr \cdot \lp - \Hr^{-1} \gr \rp + \frac{1}{2} \lp - \Hr^{-1} \gr \rp^{\top} \Hr \lp - \Hr^{-1} \gr \rp \\
        ={} & - \gr^\top \Hr^{-1} \gr + \frac{1}{2} \gr^\top \Hr^{-1} \Hr \Hr^{-1} \gr \\
        ={} & - \frac{1}{2} \gr^\top \Hr^{-1} \gr \\
        \leq{} & 0,
    \end{aligned}
\end{equation*}
where we use $\lp \Hr^{-1} \rp^\top = \lp \Hr^\top \rp^{-1} = \Hr^{-1}$ because $\Hr$ is symmetric, and the inequality is due to $\Hr \succ 0$.
Therefore, Problem~\eqref{problem:impossible-second-order} is feasible if $\dw^*$ additionally satisfies $\gf \cdot \dw^* \geq Q$.
To simplify discussion, we first assume a spherical loss landscape,
i.e.,~$\Hr = \lambda \mI$ for some $\lambda > 0$, 
and discuss the general case afterwards.
Problem~\eqref{problem:impossible-second-order} is infeasible if
\begin{equation*}
    \begin{aligned}
        \gf \cdot \dw^* &< Q \\
        \gf \cdot \lp - \Hr^{-1} \gr \rp &< Q \\
        - \gf^\top \Hr^{-1} \gr &< Q \\
        - \gf^\top \lp \frac{1}{\lambda} \mI \rp \gr &< Q \\
        \gr \cdot \gf &> - \lambda Q.
    \end{aligned}
\end{equation*}
For the infeasibility condition of general $\Hr \!\succ\! 0$, we first establish a lower bound for $\gf^\top \Hr^{-1} \gr$ with the following lemma:
\begin{lemma} \label{lemma:bilinear-bound}
    Let $\eigenmin$, $\eigenmax$ be the minimum and maximum eigenvalues of a symmetric matrix $\mA_{d \times d}$, and $\vu$, $\vv$ are vectors of dimension $d$, then $\vu^\top \mA \vv \geq \eigenmin \vu \cdot \vv - \lp \eigenmax - \eigenmin \rp \norm{\vu} \norm{\vv}$.
\end{lemma}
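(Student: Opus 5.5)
Split $\mA$ into a multiple of the identity plus a positive semidefinite remainder: set $\mB \triangleq \mA - \eigenmin \mI$. Then
\begin{equation*}
    \vu^\top \mA \vv = \eigenmin \, \vu \cdot \vv + \vu^\top \mB \vv .
\end{equation*}
The first term is already the leading term of the desired bound. So it suffices to show
\begin{equation*}
    \vu^\top \mB \vv \geq -\lp \eigenmax - \eigenmin \rp \norm{\vu} \norm{\vv}.
\end{equation*}

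Since $\mA$ is symmetric, $\mB$ is symmetric with eigenvalues $\lambda_i - \eigenmin \in [0, \eigenmax - \eigenmin]$. Its operator (spectral) norm is therefore exactly $\norm{\mB}_2 = \eigenmax - \eigenmin$. By Cauchy--Schwarz and the definition of the operator norm,
\begin{equation*}
    \abs{\vu^\top \mB \vv} \leq \norm{\vu} \norm{\mB \vv} \leq \norm{\mB}_2 \norm{\vu} \norm{\vv} = \lp \eigenmax - \eigenmin \rp \norm{\vu} \norm{\vv}.
\end{equation*}
Taking the lower side of this absolute-value bound and substituting into the decomposition gives the lemma.

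The only point needing care is the identity $\norm{\mB}_2 = \eigenmax - \eigenmin$. I will justify it via the spectral theorem: write $\mB = \sum_i (\lambda_i - \eigenmin) \vp_i \vp_i^\top$ with an orthonormal eigenbasis $\{\vp_i\}$. Then
\begin{equation*}
    \norm{\mB \vv}^2 = \sum_i (\lambda_i - \eigenmin)^2 (\vp_i \cdot \vv)^2 \leq \lp \eigenmax - \eigenmin \rp^2 \norm{\vv}^2,
\end{equation*}
because every coefficient lies in $[0, \eigenmax - \eigenmin]$. Beyond this step, no further obstacle is expected.

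This bound is exactly what the subsequent infeasibility argument needs. Applied with $\mA = \Hr^{-1}$, $\vu = \gf$ and $\vv = \gr$, it yields a lower bound on $\gf^\top \Hr^{-1} \gr$ in terms of the hardness $\kappa = \gr \cdot \gf$.
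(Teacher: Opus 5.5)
Your proposal is correct and is essentially the paper's proof: split $\mA = \eigenmin \mI + \mB$ with $\mB = \mA - \eigenmin \mI$, then bound the remainder using Cauchy--Schwarz and $\norm{\mB}_2 = \eigenmax - \eigenmin$. The paper only asserts that norm identity, whereas you justify it via the spectral theorem; only the upper bound $\norm{\mB}_2 \leq \eigenmax - \eigenmin$ is actually needed.
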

\begin{proof}
    Decompose $\mA$ into $\mA = \eigenmin \mI + \lp \mA - \eigenmin \mI \rp$ and denote $\mB \triangleq \mA - \eigenmin \mI$.
    Then the spectral norm of $\mB$ is $\norm{\mB}_2 = \eigenmax - \eigenmin$. 
    We have
    \begin{align*}
        \vu^\top \mA \vv &= \vu^\top \lp \eigenmin \mI \rp \vv + \vu^\top \mB \vv \\
        &\geq \eigenmin \vu \cdot \vv - \norm{\vu} \norm{\mB \vv} 
        \tag{\text{Cauchy–Schwarz inequality}} \\
        &\geq \eigenmin \vu \cdot \vv - \norm{\vu} \norm{\mB}_2 \norm{\vv} 
        \tag{\text{Definition of spectral norm}} \\
        &= \eigenmin \vu \cdot \vv - \lp \eigenmax - \eigenmin \rp \norm{\vu} \norm{\vv}.
    \end{align*}
\end{proof}
Problem~\eqref{problem:impossible-second-order} is infeasible if $- \gf^\top \Hr^{-1} \gr < Q$, by applying Lemma~\ref{lemma:bilinear-bound} we obtain the infeasibility condition:
\begin{equation*}
    \gr \cdot \gf > \frac{1}{\eigenmin} \lb \lp \eigenmax - \eigenmin \rp \norm{\gr} \norm{\gf} - Q \rb,
\end{equation*}
where $\eigenmin$, $\eigenmax$ are the minimum and maximum eigenvalues of $\Hr^{-1}$, respectively.

\fbox{\textbf{Case 2}: $\gamma > 0$.}

Primal feasibility must bind so $\gf \cdot \dw^* = Q$.
Substituting the expression for $\dw^*$ in Eq.~\eqref{eq:update-second-order} and solving for $\gamma$,
\begin{equation*}
    \gamma = \frac{Q + \gf^\top \Hr^{-1} \gr}{\gf^\top \Hr^{-1} \gf}.
\end{equation*}
To simplify notation, let $A = \gf^\top \Hr^{-1} \gf$, $B = \gr^\top \Hr^{-1} \gr$, $C = \gr^\top \Hr^{-1} \gf = \gf^\top \Hr^{-1} \gr$, then $\gamma = \frac{Q + C}{A}$, and
\begin{align*}
    F(\dw^*)
    &= \gr \cdot \lp \gamma \Hr^{-1} \gf - \Hr^{-1} \gr \rp + \frac{1}{2} \lp \gamma \Hr^{-1} \gf - \Hr^{-1} \gr \rp ^\top \Hr \lp \gamma \Hr^{-1} \gf - \Hr^{-1} \gr \rp \\
    &= \gamma C - B + \frac{1}{2} \lp \gamma^2 A - 2 \gamma C + B \rp \\
    &= \frac{1}{2} \gamma^2 A - \frac{1}{2} B \\
    &= \frac{1}{2} \lp \frac{Q + C}{A} \rp^2 A - \frac{1}{2} B \\
    &= \frac{1}{2} \frac{(Q + C)^2}{A} - \frac{1}{2} B .
\end{align*}
Problem~\eqref{problem:impossible-second-order} is infeasible if $F(\dw^*) > 0$, which is equivalent to $(Q + C)^2 > AB$.
Note that $\frac{Q + C}{A} = \gamma > 0$ and $A > 0$ since $\Hr^{-1} \succ 0$ (assuming $\gf \neq \vzero$),
we know that $Q + C > 0$.
The infeasibility condition then becomes
\begin{equation} \label{ineq:infeasibility}
    Q + C > \sqrt{AB}.
\end{equation}
If we again assume that $\Hr = \lambda \mI$ for some $\lambda > 0$, then $A = \frac{1}{\lambda} \norm{\gf}^2$, $B = \frac{1}{\lambda} \norm{\gr}^2$, $C = \frac{1}{\lambda} \gr \cdot \gf$.
Thus,
\begin{align}
    Q + \frac{1}{\lambda} \gr \cdot \gf &> \sqrt{\frac{1}{\lambda} \norm{\gr}^2 \cdot \frac{1}{\lambda} \norm{\gf}^2} \notag \\
    \gr \cdot \gf &> \norm{\gr} \norm{\gf} - \lambda Q. \label{ineq:cf-spherical}
\end{align}
For the general case, again let $\eigenmin$, $\eigenmax$ be the minimum and maximum eigenvalues of $\Hr^{-1}$, by applying Lemma~\ref{lemma:bilinear-bound},
\begin{equation*}
    C = \gr^\top \Hr^{-1} \gf \geq \eigenmin \gr \cdot \gf - \lp \eigenmax - \eigenmin \rp \norm{\gr} \norm{\gf}.
\end{equation*}
By the Rayleigh quotient bound, $A \leq \eigenmax \norm{\gf}^2$, $B \leq \eigenmax \norm{\gr}^2$, thus
\begin{equation*}
    \sqrt{AB} - Q \leq \sqrt{\eigenmax \norm{\gf}^2 \cdot \eigenmax \norm{\gr}^2} - Q = \eigenmax \norm{\gr} \norm{\gf} - Q.
\end{equation*}
From Ineq.~\eqref{ineq:infeasibility} we know the infeasibility condition is 
\begin{equation*}
    C > \sqrt{AB} - Q.
\end{equation*}
Therefore, the problem is infeasible if the lower bound of $C$ is strictly greater than the upper bound of $\sqrt{AB} - Q$,
\begin{align*}
    \eigenmin \gr \cdot \gf - \lp \eigenmax - \eigenmin \rp \norm{\gr} \norm{\gf} &> \eigenmax \norm{\gr} \norm{\gf} - Q \\
    \eigenmin \gr \cdot \gf &> 2\eigenmax \norm{\gr} \norm{\gf} - \eigenmin \norm{\gr} \norm{\gf} - Q \\
    \gr \cdot \gf &> \frac{1}{\eigenmin} \lb \lp 2\eigenmax - \eigenmin \rp \norm{\gr} \norm{\gf} - Q \rb . \\
\end{align*}
Observe that \textbf{we can combine both cases} since the condition when $\lambda = 0$ is subsumed by the condition when $\lambda > 0$:
\begin{equation*}
    \frac{1}{\eigenmin} \lb \lp 2\eigenmax - \eigenmin \rp \norm{\gr} \norm{\gf} - Q \rb \geq \frac{1}{\eigenmin} \lb \lp \eigenmax - \eigenmin \rp \norm{\gr} \norm{\gf} - Q \rb ,
\end{equation*}
and
\begin{equation*}
    \norm{\gr} \norm{\gf} - \lambda Q \geq - \lambda Q.
\end{equation*}
We interpret the local collateral forgetting condition under the assumption of a spherical loss landscape~(Ineq.~\eqref{ineq:cf-spherical}).
The maximum dot product $\norm{\gr} \norm{\gf}$ describes the maximum hardness of the problem, the condition says that if the actual hardness of the problem is too close to this maximum, then collateral forgetting is unavoidable.
The term $-\lambda Q$ acts as a safety margin, lowering which lowers the threshold and it is easier for the problem to become infeasible.
If $\lambda$ is large, the loss surface is steep, it is harder to bound the decrease in retain utility.
If $Q$ is large, we are putting a stricter demand on the forget quality, which makes it harder to satisfy both constraints simultaneously.
In the extreme case when $\lambda Q$ is large enough, it is possible that $\norm{\gr} \norm{\gf} - \lambda Q < 0$, and even a negative $\gr \cdot \gf$ could theoretically be infeasible.

When the problem is feasible, i.e.,~both constraints are satisfied, Eq.~\eqref{eq:update-second-order} provides an unlearning update that does not induce collateral forgetting.
However, it requires computing and inverting the Hessian, which is prohibitively expensive.
It also requires the additional assumption that the Hessian is positive definite, hence limiting its applicability to non-convex models, whereas our first-order algorithm can be applied to any non-convex model as long as a suitable $R$ is chosen for approximation quality.

\subsection{The Retain-Constrained Problem}
\label{app:retain-constrained}
The retain-constrained problem provides an alternative formulation to Problem~\eqref{problem:opt-ldr}.
In Problem~\eqref{problem:opt-ldf}, we optimize for the best forget quality improvement while also enforcing a constraint on the minimum retain utility improvement.
The retain-constrained problem is formulated below for $U > 0$, $R > 0$:
\begin{equation} 
\label{problem:opt-ldf}
\begin{aligned}
    & \underset{\dw}{\text{maximize}}  && \gf \cdot \dw \\
    & \text{subject to}                && \gr \cdot \dw \leq -U, \quad \norm{\dw} \leq R .
\end{aligned}
\end{equation}
which is the same as
\begin{equation*}
\begin{aligned}
    & \underset{\dw}{\text{minimize}}  && -\gf \cdot \dw \\
    & \text{subject to}                && -\gr \cdot \dw \geq U, \quad \norm{\dw} \leq R . 
\end{aligned}
\end{equation*}
This is structurally equivalent to Problem~\eqref{problem:opt-ldr}, and is only feasible when $U \leq R \norm{\gr}$.
We can derive the solution to Problem~\eqref{problem:opt-ldf} by substituting $\gr \leftarrow -\gf, \gf \leftarrow -\gr, Q \leftarrow U$ into the solution to Problem~\eqref{problem:opt-ldr}, and then negating the optimal value.
Note that the hardness measure $\kappa = \gr \cdot \gf$ is preserved under this substitution:
\begin{equation*}
\begin{aligned}
    \dw^* &= 
    \begin{dcases}
        - \frac{R}{\norm{-\gf}} \cdot (-\gf) \quad & \text{if } \kappa \leq - \frac{U \norm{-\gf}}{R} \\
        \frac{U}{\norm{-\gr}^2} \cdot (-\gr) - \sqrt{R^2 - \frac{U^2}{\norm{-\gr}^2}} \cdot \frac{-\perpen{\gf}}{\norm{-\perpen{\gf}}} \quad & \text{otherwise}
    \end{dcases} \\
    &= 
    \begin{dcases}
        \frac{R}{\norm{\gf}} \cdot \gf \quad & \text{if } \kappa \leq \kappa_3 \\
        \mathrlap{-\frac{U}{\norm{\gr}^2} \cdot \gr + \sqrt{R^2 - \frac{U^2}{\norm{\gr}^2}} \cdot \frac{\perpen{\gf}}{\norm{\perpen{\gf}}}}
        \hphantom{\frac{U}{\norm{-\gr}^2} \cdot (-\gr) - \sqrt{R^2 - \frac{U^2}{\norm{-\gr}^2}} \cdot \frac{-\perpen{\gf}}{\norm{-\perpen{\gf}}}}
        \quad & \text{otherwise},
    \end{dcases}
\end{aligned}
\end{equation*}
where $\perpen{\gf} \triangleq \gf - \frac{\gr \cdot \gf}{\norm{\gr}^2} \cdot \gr$ and $\kappa_3 \triangleq -U \norm{\gf} / {R}$.
Similarly, we have
\begin{equation*}
    \min (-\gf \cdot \dw) = 
    \begin{dcases}
        - R \norm{\gf} \quad & \text{if } \kappa \leq \kappa_3 \\
        \frac{\kappa U}{\norm{\gr}^2} - \sqrt{\lp \norm{\gf}^2 - \frac{\kappa^2}{\norm{\gr}^2} \rp \lp R^2 - \frac{U^2}{\norm{\gr}^2} \rp} \quad & \text{otherwise}.
    \end{dcases}
\end{equation*}
The optimal value of the original problem is then its negation:
\begin{equation*}
    \max \gf \cdot \dw = 
    \begin{dcases}
        R \norm{\gf} \quad & \text{if } \kappa \leq \kappa_3 \\
        - \frac{\kappa U}{\norm{\gr}^2} + \sqrt{\lp \norm{\gf}^2 - \frac{\kappa^2}{\norm{\gr}^2} \rp \lp R^2 - \frac{U^2}{\norm{\gr}^2} \rp} \quad & \text{otherwise}.
    \end{dcases}
\end{equation*}
This also implies that $\max \gf \cdot \dw$ monotonically decreases with $\kappa$ for fixed $\norm{\gr}$ and $\norm{\gf}$, given $F_r^* = \min \gr \cdot \dw$ monotonically increases with $\kappa$ for fixed $\norm{\gr}$ and $\norm{\gf}$, and the fact that hardness $\kappa$ is preserved under our substitution.

The direct update ($\kappa \leq \kappa_3$) in the retain-constrained problem corresponds to gradient ascent on the forget data if we set $R = \eta \norm{\gf}$.
To find the stopping criterion of \reciprocal, we find the infeasibility condition to the following problem:
\begin{equation} 
\label{problem:cf-first-reciprocal}
    \gf \cdot \dw \geq 0, \quad 
    \gr \cdot \dw \leq -U 
    \quad \text{and} \quad
    \norm{\dw} \leq R.
\end{equation}
Similar to App.~\ref{app:cf-first}, we cannot find a feasible $\dw$ to Problem~\eqref{problem:cf-first-reciprocal} if $\max \gf \cdot \dw < 0$.
The stopping criterion is obtained by solving the inequality. 
Consequently, we terminate \reciprocal when $\kappa > \kappa_4$, where 
\begin{equation*}
    \kappa_4 \triangleq \sqrt{\norm{\gr}^2 \norm{\gf}^2 - \frac{U^2 \norm{\gf}^2}{R^2}}.
\end{equation*}
We provide the pseudocode to \reciprocal in Alg.~\ref{alg:hamu-u}.
\begin{algorithm}[t]
\caption{\reciprocal}
\label{alg:hamu-u}
\begin{algorithmic}[1]
\STATE {\bfseries Input:} initial weights $\vw_0$, retain data $D_r$, forget data $D_f$, batch size $m$, learning rate $\eta$, maximum iterations $T$, retain quality constraint $U>0$
\FOR{$t = 0,1,\dots,T-1$}
    \STATE Sample i.i.d.~size-$m$ batches $B_r \!\sim\! D_r$ and $B_f \!\sim\! D_f$
    \STATE $\grb \gets \nabla_{\vw} L_t(B_r); \ \gfb \gets \nabla_{\vw} L_t(B_f)$
    \STATE $R \gets \eta \norm{\gfb}$
    \IF{$U > R \norm{\grb}$}
        \STATE \textbf{break}
    \ENDIF
    \STATE $\bar{\kappa} \gets \grb \cdot \gfb$
    \STATE $\bar{\kappa}_3 \gets -U \norm{\gfb} / {R}$
    \STATE $\bar{\kappa}_4 \gets \sqrt{\lp \norm{\grb} \norm{\gfb} \rp^2 - {U^2 \norm{\gfb}^2} / {R^2}}$
    \IF{$\bar{\kappa} > \bar{\kappa}_4$}
        \STATE \textbf{break}
    \ENDIF
    \IF{$\bar{\kappa} \le \bar{\kappa}_3$}
        \STATE $\dw \gets \dfrac{R}{\norm{\gfb}} \cdot \gfb$
    \ELSE
        \STATE $\gfbp \gets \gfb - \dfrac{\bar{\kappa}}{\norm{\grb}^2} \cdot \grb$
        \STATE $\dw \gets - \dfrac{U}{\norm{\grb}^2} \cdot \grb + \sqrt{R^2 - \dfrac{U^2}{\norm{\grb}^2}} \cdot \dfrac{\gfbp}{\norm{\gfbp}}$
    \ENDIF
    \STATE $\vw_{t+1} \gets \vw_t + \dw$
\ENDFOR
\STATE \textbf{return} $\vw_t$
\end{algorithmic}
\end{algorithm}

\section{Adapting Stopping Criterion for Layer-Specific Constraints}
\label{app:stopping_criterion}
In Alg.~\ref{alg:hamu-q}, the stopping criterion is specified to be
\begin{equation*}
\gr \cdot \gf
= \kappa
> \kappa_2
= \sqrt{
    \lp \norm{\gr} \norm{\gf} \rp ^2 - \frac{Q^2}{R^2} \norm{\gr}^2
}.
\end{equation*}
This inequality is equivalent to
\begin{equation}
\label{ineq:q_stopping_criterion}
Q
> \frac{R}{\norm{\gr}} \sqrt{
    \lp \norm{\gr}\norm{\gf} \rp ^ 2
    - \lp \gr \cdot \gf \rp ^ 2
    }.
\end{equation}
As described in Sec.~\ref{sec:layer_specific_constraint}, the global constraint $Q$ can be distributed into layer-wise constraints $\lp Q_i \rp_{i=1}^{\ell}$, and the stopping criterion has to be adapted accordingly.
One possible way of terminating the unlearning process could be to replace all variable in Ineq.~\eqref{ineq:q_stopping_criterion} with their layer-wise counterparts,
\begin{equation}
\label{ineq:q_stopping_criterion_layer}
Q_i
> \frac{R}{\norm{\gr^{(i)}}} \sqrt{
    \lp \norm{\gr^{(i)}}\norm{\gf^{(i)}} \rp ^ 2
    - \lp \gr^{(i)} \cdot \gf^{(i)} \rp ^ 2
    },
\end{equation}
such that unlearning terminates when this layer-wise stopping criterion is met for any layer $i$ in the model.

However, as described in Sec.~\ref{sec:layer_specific_constraint}, different layers in the model can exhibit varying sensitivities to the same data.
It might be harder to unlearn at certain layers compared to other layers, such that the layer-wise stopping criterion in Ineq.~\eqref{ineq:q_stopping_criterion_layer} is triggered in an earlier unlearning epoch for layers that are harder to unlearn.
Terminating when any layer meets this criterion would result in an undesirably early termination of unlearning, when other layers have not yet met the criterion.

Instead, considering the layer-wise constraints are distributed with $\sum_{i=1}^{l} Q_i = Q$, we opt to terminate when this aggregate condition across all layers is met, 
\begin{equation}
\label{ineq:q_stopping_criterion_sum}
Q = \sum_{i=1}^{\ell} Q_i
> \sum_{i=1}^{\ell} \frac{R}{\norm{\gr^{(i)}}} \sqrt{
    \lp \norm{\gr^{(i)}}\norm{\gf^{(i)}} \rp ^ 2
    - \lp \gr^{(i)} \cdot \gf^{(i)} \rp ^ 2
    }.
\end{equation}
Using Ineq.~\eqref{ineq:q_stopping_criterion_sum} as the stopping criterion allows layers that are easier to unlearn to compensate for layers that are harder to unlearn, such that unlearning only terminates when the required forget quality threshold exceeds the value that can be sustained with the aggregate of all layers in the model.
This is the stopping criterion that we have used in Fig.~\ref{fig:stopping_criterion}.

\section{Unlearning Algorithms}
\label{app:baselines}
In this paper, we compare \ours against the following loss-based and gradient-based unlearning algorithms.
We use the same learning rate of \num{1e-4} for all unlearning algorithms in our experiments.

\paragraph{Gradient ascent~(GA)}
The gradient ascent unlearning algorithm~\citep{jang2023knowledge} only trains on the forget data, and uses a negated loss function to increase the loss on the forget data.

\paragraph{Fine-tune~(FT)}
The fine-tune unlearning algorithm trains the model using only the retain data, to decrease the loss on the retain data.

\paragraph{Gradient difference~(GDiff)}
The gradient difference unlearning algorithm~\citep{liu2022backdoor} combines GA and FT, with gradient ascent on the forget data and gradient descent on the retain data.

\paragraph{KL minimization~(KL)}
The KL minimization algorithm~\citep{maini2024tofu} combines gradient ascent on the forget data with minimizing the Kullback-Leibler divergence between the prediction of the model being unlearned and the original model before unlearning on the retain data.

\paragraph{SCRUB}
The SCRUB algorithm~\citep{Kurmanji2023-xm} uses a weighted combination of different losses, by maximizing the KL divergence between the predictions of the unlearned model and original model on the forget data, while minimizing the loss and KL divergence of the predictions on the retain data. 
Following the source code\footnote{\url{https://github.com/meghdadk/SCRUB}} of SCRUB, we set $\alpha = 0.001$ and $\gamma = 0.99$.

\paragraph{Gradient Rectified Unlearning~(GRU)}
The Gradient Rectified Unlearning~(GRU) algorithm~\citep{wang2025gru} projects the gradient on the forget data, $\gf$, such that the new vector is orthogonal to the gradient on the retain data, $\gr$, when the angle between the two gradients is obtuse.

\paragraph{Projecting Conflicting Gradients~(PCGrad)}
We adapt the projecting conflicting gradients algorithm~\citep{yu2020gradient} to our unlearning setting.
If the angle between $\gf$ and $\gr$ is obtuse, $\gf$ is projected to be orthogonal to $\gr$ and vice versa for $\gr$, and the mean of the two projected gradients is used for the weight update.

\paragraph{\ours}
For our unlearning algorithm \ours, we perform weight updates according to Alg.~\ref{alg:hamu-q} and Alg.~\ref{alg:hamu-u}.

\subsection{Implementation Details}
For unlearning algorithms that use both the retain and forget data~(\ours, GDiff, KL, SCRUB), we duplicate the forget data to match the size of the retain data.
For unlearning algorithms which only use the retain data~(FT) or the forget data~(GA), we duplicated smaller unlearning datasets to match the size of the dataset used when using both retain and forget data~(\ours, GDiff, KL, SCRUB).
For fair comparison, all algorithms were re-implemented with the same training code, with only the specific weight update step swapped out for each unlearning algorithm.
The same GPU optimizations described in Sec.~\ref{app:parallelized} were used for all methods which require gradients of both the forget and retain data~(GDiff, KL, SCRUB, GRU, PCGrad, \ours).
This provides an additional advantage for other unlearning methods, and might result in lower resource requirements than those reported in the original papers.
In our experiments, we ran all unlearning algorithms using the same learning rate for the same number of unlearning steps on the same hardware and number of GPUs.
Unless otherwise stated, we use $42$ as the random seed in all experiments.

\section{CV Experimental Setup}
\label{app:cv_exp_setup}
For training and unlearning of models for CV task, our code is adapted from the Huggingface Trainer\footnote{\url{https://huggingface.co/docs/transformers/en/main_classes/trainer}}.

\subsection{Dataset Construction}
\label{app:cv_forget_construction}
We used the CIFAR-10 dataset, which consists of $50000$ training samples, with $10$ classes of equal size.
No pre-processing transformations~(scale, flip or zoom) were used on the training dataset.

To construct the forget data, we first choose a random class $c$ which contains $n$ samples.
Given the parameter $\rho \in [0, 1]$, the forget data consists of $n - \floor{\rho n}$ randomly sampled data points from class $c$, and $\floor{\rho n}$ randomly sampled data points from the union of all other classes and data points in class $c$ that are not selected in the first step.
\begin{itemize}
    \item $\rho = 0$: forget one entire class, easy instance
    \item $\rho = 1$: forget i.i.d. samples from all classes, hard instance
\end{itemize}
In our experiments, we used $c = 0$ and $\rho \in \lc 0, 0.25, 0.5, 0.75, 1 \rc$.

\subsection{Training of CV Model}
\label{app:cv_training_setup}
We used the ResNet-20 architecture~\citep{he2016deep} with randomly initialized weights.
We used a fixed learning rate of \num{1e-3} with the AdamW optimizer~\citep{adamw}, batch size of $5000$ and trained for $50$ epochs.
Every 5 epochs, we performed evaluation on the CIFAR-10 test set, which consists of $10000$ samples across the $10$ classes, and saved the checkpoint model weights.
The final trained model was selected to be the checkpoint with the highest average retain and forget data evaluation accuracy.
Training was done on $1$ NVIDIA L40 GPU, requiring approximately $10$ minutes for each model.

\subsection{Unlearning of CV Model}
\label{app:cv_unlearning_setup}
To perform unlearning, we used a batch size of $5000$ samples per GPU, and trained for a maximum of $25$ epochs.
We used a fixed learning rate or \num{1e-4} for all unlearning algorithms, without any momentum-based optimizers.
Additional discussion and results with the use of optimizers are in App.~\ref{app:practical_considerations}.
Unlearning was done on $2$ NVIDIA L40 GPUs, with each run of the unlearning experiment requiring approximately $10$ minutes~(assuming no early termination).
The unlearning dataset is shuffled once before the start of unlearning, and is not shuffled between unlearning epochs.

\section{LLM Experimental Setup}
\label{app:llm_exp_setup}
For LLM training and unlearning, our code is adapted from the Supervised Fine-Tuning~(SFT) Trainer\footnote{\url{https://huggingface.co/docs/trl/main/en/sft_trainer}}.

\subsection{Dataset Construction}
\label{app:llm_forget_construction}
We used the WaterDrum-TOFU unlearning dataset for the LLM experiments, which consists of QA pairs for an LLM QA task.
This dataset extends the TOFU unlearning dataset~\citep{maini2024tofu} by constructing different variations of the QA pairs of different levels of similarity, watermarked with the text watermarking scheme Waterfall~\citep{lau2024waterfall}.
This dataset contains different splits, of which we used the watermarked ``No duplicate'' and ``Semantic duplicate'' splits for $10\%$ of the forget data.
We refer to the settings using these two splits as ``semantically dissimilar'' and ``semantically similar''.

Following the implementation in~\citet{lu2025waterdrum}, the ``semantically dissimilar'' setting consists of $400$ samples in the forget data, and $3600$ samples in the retain data, while the ``semantically similar'' setting consists of $400$ samples in the forget data, and $4000$ samples in the retain data, of which $400$ samples in the retain data are paraphrased versions of the $400$ samples in the forget data.

\subsection{Training of LLM}
\label{app:llm_training_setup}
The original full LLM was trained from the base LLM \texttt{Llama-2-7b-chat-hf}.
We used LoRA~\citep{hulora} with $r=8$ and $\alpha=16$, batch size of $64$ and trained for 10 epochs.
We used a learning rate of \num{1e-4} with the AdamW optimizer~\citep{adamw} during LLM training.
Training was done on $1$ NVIDIA H200 GPU, requiring approximately $20$ minutes for each LLM.

\subsection{Unlearning of LLM}
\label{app:llm_unlearning_setup}
We used a batch size of $50$ samples per GPU, and trained for a maximum of 720 steps~(equivalent of 10 epochs on $D_r$).
We performed unlearning without using any momentum-based optimizers.
Unlearning was done on $2$ NVIDIA H200 GPUs, with each run of the unlearning experiment requiring approximately $20$ minutes~(assuming no early termination).
The unlearning dataset is shuffled once before the start of unlearning, and is not shuffled between unlearning epochs.

For this experiment, we used the \texttt{WaterDrum-TOFU} dataset with $10\%$ of the training dataset being the forget data, where the retain data is $9$ times the size of the forget data.
When only using the retain data~(e.g.,~FT), we duplicated the retain data $2$ times.
When only using the forget data~(e.g.,~GA), we duplicated the forget data $18$ times.

In the experiments, we terminated \primal once the feasibility criterion $Q \leq R \norm{\gf}$ of Problem~\eqref{problem:opt-ldr} fails.
Additionally, to reduce the total time required for all experiments, we implemented some early termination checks in the unlearning process.
We terminate the algorithm when forget loss decreases below $0.1$~(model is not unlearning on the forget data), or retain loss increases above $0.4$~(model has unlearned too much on the retain data).
For reference, the losses on the forget and retain data are both approximately $0.3$ before unlearning.

\section{Practical Considerations with Modern Deep Learning Frameworks}
\label{app:practical_considerations}
\subsection{Parallelized Computation of Forget and Retain Gradients}
\label{app:parallelized}
As our proposed unlearning algorithm requires the use of and comparison between the gradients of the retain samples $\grb$ and gradients of the forget samples $\gfb$, both copies of the gradients are required to be available on the GPU at each iteration. 
A naive approach at each step would be to perform the forward and backward pass on the retain samples, store the gradients separately, then perform the forward and backward pass on the forget samples, and finally compare the two gradients.
This incurs a high memory cost on the GPU, as memory has to be allocated for both $\grb$ and $\gfb$.
As such, we would only be able to process half the number of samples at each iteration as compared to if we only needed one copy of the gradients based on the memory available on the GPU.
This is also true even if this na{\"i}ve approach is used with multi-GPU model parallelism, where each GPU still has to allocate memory for two copies of gradients.

To perform this efficiently, we implemented a parallelized unlearning script, which allocates the forget and retain samples to alternate GPUs when ran on an even number of GPUs.
This allows half of the GPUs to perform computation on the retain samples while the other half of the GPUs perform computation on the forget samples at the same time.
Each half of the GPUs would only require memory to store either $\grb$ or $\gfb$~(but not both), allowing us to make full use of compute and memory resource on all GPUs.
As each GPU only needs enough memory to store a single copy of its own gradients, we would effectively be able to process double the batch size with the same memory available, as compared to the na{\"i}ve implementation~(or enable \ours to be used when two copies of the gradients cannot fit within the memory of a single GPU).

To implement this, we made use of the integration between the SFT Trainer and \texttt{accelerate} library~\citep{accelerate} for a multi-GPU setup.
During the data preparation step, we interleave the forget and retain samples into the unlearning dataset according to the batch size of each GPU, such that the dataloader would send forget samples to one half of the GPUs and retain sample to the other GPUs.
During the backward pass, instead of simply aggregating the gradients from all GPUs as in a typical multi-GPU setup, we instead aggregate the gradients from alternate GPUs responsible for the forget and retain samples separately.
When comparison of $\grb$ and $\gfb$ are required, each layer's gradient is iteratively sent over to the other GPUs as needed, such that only a small amount of additional memory is required to store only the single layer's worth of gradients.
These gradients are then passed into the weight update function which computes the final weight update according to the unlearning algorithm.
Exact implementation is in our provided code.

Table~\ref{tab:compute_time} shows the GPU memory and computation cost of the different unlearning methods under the CV experimental setting~(App.~\ref{app:cv_exp_setup}).
All methods are evaluated under the same hardware and process the same number of samples.
Reported timing excludes computations other than unlearning, such as script startup, evaluation and logging operations.
Results show \ours only incurs modest computational overhead compared to standard baselines, and remains comparable to other multi-objective methods~(KL, SCRUB).
For memory, \ours does not incur additional GPU memory as it only fetches gradients from other GPU when needed and performs computation in-place, whereas KL and SCRUB require more GPU memory to store a copy of the original model.

\begin{table}[t]
    \centering
    \caption{Average computational time per epoch and peak GPU memory usage of unlearning across the different unlearning methods under the CV setting.}
    \begin{tabular}{c|ccccccc}
    \toprule
    Method & FT    & GA    & GDiff & KL    & SCRUB & \primal & \reciprocal \\
    \midrule
    Unlearning time (seconds/epoch) & 0.879 & 0.884 & 0.868 & 1.067 & 1.064 & 0.976  & 0.994  \\
    Peak GPU memory (GB) & 7.65 & 7.65 & 7.65 & 8.87 & 8.87 & 7.65 & 7.65 \\
    \bottomrule
    \end{tabular}
    \label{tab:compute_time}
\end{table}

\subsection{Using \ours with Optimizers}
\label{app:optimizer}
In the main paper, our experiments are set up in a way to clearly demonstrate the theoretical formulation in Problem~\eqref{problem:opt-ldr} and Problem~\eqref{problem:opt-ldf}.
However, when deploying unlearning algorithms in practice on real world ML models, there are often other practical constraints and requirements, as well as practical optimizations that could be made to improve the overall performance.
For practical constraints, the model owner might be limited to a small number of unlearning steps~(e.g.,~only $1$ epoch) such that the computational cost of unlearning remains small relative to retraining.
For practical optimizations, the model owner typically implements momentum-based optimizers~(e.g.,~AdamW~\citep{adamw}) to accelerate the convergence of the optimization process.

Optimizers such as AdamW~\citep{adamw} speed up the optimization process by incorporating the weight updates in previous iterations to the update in the current iteration.
When the gradients do not change significantly between iterations, the step size taken by each weight update gradually increases.
This speeds up the unlearning process, and the model is able to achieve a similar or better forget quality and retain utility with fewer number of unlearning steps.

To allow our weight update to work with momentum-based optimizers, instead of directly performing weight update with $\vw_{t+1} = \vw_t + \dw^*$, we instead compute the equivalent gradient which would have resulted in a gradient descent weight update, $\g{}^* = -\frac{\dw^*}{\eta}$.
This equivalent gradient $\g{}^*$ is then passed to the momentum-based optimizers.
When we set $R = \eta \norm{\gr}$, the equivalent gradient for \primal is 
\begin{equation} \label{eq:equivalent-grad-ldr}
    \g{}^* = 
    \begin{dcases}
        \gr \quad & \text{if } \kappa \leq \kappa_1 \\
        - \frac{Q \norm{\gr}}{R \norm{\gf}} \cdot \gf + \sqrt{\norm{\gr}^2 - \lp \frac{Q \norm{\gr}}{R \norm{\gf}} \rp^2} \cdot \frac{\perpen{\gr}}{\norm{\perpen{\gr}}} \quad & \text{otherwise}.
    \end{dcases}
\end{equation}
Fig.~\ref{fig:exp_optimizer} shows that \ours is compatible with the use of optimizers, and allows for unlearning to be performed with fewer number of unlearning steps.
Using AdamW with only $1$ epoch achieves a higher $\Delta L_f$ and $-\Delta L_r$ than vanilla SGD in $5$ epochs.
\begin{figure}
    \centering
    \includegraphics[width=0.49\linewidth]{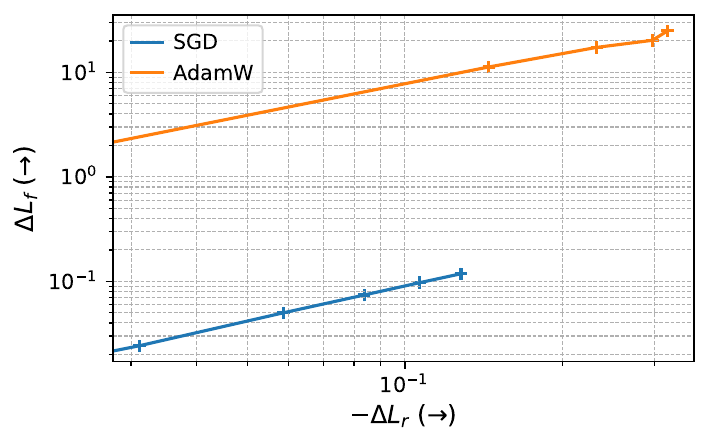}
    \caption{Comparing unlearning trajectory when using \reciprocal~($Q=0.9\eta$) with vanilla SGD versus AdamW, for 5 unlearning epochs.
    Note that the axes are plotted in log scale.
    A larger increase in $\Delta L_f$~(and $-\Delta L_r$) is achieved with the same number of unlearning epochs for AdamW than vanilla SGD.}
    \label{fig:exp_optimizer}
\end{figure}

\subsection{Sensitivity analysis for unlearning batch size}
\label{app:ablation_batch_size}
In our experimental setup, we used a large batch size that could still fit within the available memory in the GPU.
However, there will be situations where a smaller batch size has to be used, such as when a GPU with less memory is used, the model being unlearned is large with more parameters, or text training data for LLMs has longer sequence lengths.
A smaller batch size can result in several possible issues, such as poorer approximation and high variance between iterations of batch gradient estimates when updating the weights.

To resolve these issues, we have implemented several possible solutions in our code.
Firstly, our code can be run with more than 2 GPUs, such that alternate GPUs handle the gradient computation of the retain and forget data respectively.
When more GPUs are available, it results in a larger effective batch size.
Secondly, our code is also fully compatible with gradient accumulation, where gradient values are collated across multiple iterations of the unlearning loop before the weight update is performed, similarly achieving a larger effective batch size.
Lastly, to address the issue of fluctuations in $\bar{\kappa}$ prematurely triggering the stopping criterion, practical adjustments can be made to the stopping condition, such as implementing a moving average of $\kappa / \kappa_2$, or only stop when the stopping criterion is met for a certain number of times.

To evaluate the effect of different batch size on \ours, we conducted ablation experiments, varying the batch size used in the CV experiments.
Table~\ref{tab:ablation_batch_size} shows the unlearning performance of \primal with batch size between $128$ and $2048$.
We observe that smaller batch sizes tend to achieve better forget quality, likely due to more frequent weight updates for a fixed number of training samples~(a smaller batch size will have more weight update steps within a single epoch).
Retain utility is relatively robust to batch size, but very small batches~(128) degrade the retain utility, and increase runtime due to poor GPU utilization.
Under this experimental setting, batch size 512 provides a good balance.

\begin{table*}[ht]
\caption{Sensitivity analysis with varying batch size.}
\label{tab:ablation_batch_size}
\centering
\begin{tabular}{c|cccc}
\toprule
Batch size & $-\Delta L_r$↑ & $\Delta L_f$↑ & Retain accuracy (\%)↑ & Forget accuracy (\%)↓ \\
\midrule
128        & -0.416      & 14.7       & 72.7    & 0.0     \\
256        & 0.146       & 6.04       & 84.9    & 2.4     \\
512        & 0.246       & 2.65       & 88.8    & 18.1    \\
1024       & 0.252       & 1.11       & 88.9    & 43.2    \\
2048       & 0.201       & 0.48       & 86.2    & 58.8    \\
\bottomrule
\end{tabular}
\end{table*}

\section{Additional Experimental Results}
\label{app:exp_extra}
\subsection{Baseline Comparison for Varying Data Similarity Mixing Ratio for CV Task}
\label{app:additional_baseline_cv}
Fig.~\ref{fig:app:exp_cv_baseline} shows a more comprehensive plot of Fig.~\ref{fig:exp_cv_baseline} for the CV task.
We include the unlearning trajectories for more epochs, $25$ epochs\footnote{The computation cost of $25$ epochs of unlearning is similar to that required for retraining~(training was performed for $50$ epochs, but on one GPU as opposed to two GPUs for unlearning), which is unrealistic in practice, since a model owner could simply retrain the model from scratch rather than to perform unlearning.
Despite this, we show Fig.~\ref{fig:app:exp_cv_baseline} so as to clearly illustrate the differences between the different unlearning algorithms.
}, compared to 5 epochs in Fig.~\ref{fig:exp_cv_baseline}, for $\rho \in \lc 0.0, 0.25, 0.5, 0.75, 1.0 \rc$.
These results corroborate the conclusions in Fig.~\ref{fig:exp_cv_baseline}, where only \ours performs well in harder unlearning scenarios.
We can clearly observe that as the similarity mixing ratio increases, all unlearning methods perform progressively worse, with a smaller $\Delta L_f$ and $-\Delta L_r$.
Notably, the unlearning performance of the two better performing baselines, GDiff and SCRUB, notably deteriorates beyond $\rho=0.25$.
GDiff is only able to achieve very small magnitude of $\Delta L_f$ and $-\Delta L_r$, while for SCRUB, $\Delta L_f$ becomes negative.
Here, we additionally include the unlearning results for $\rho = 1.0$, where the forget data contains random samples of all classes from the overall training data, with sample distribution indistinguishable from that of the retain data.
It can also be noted that when the forget and retain data becomes too similar~($\rho = 1.0$), it becomes impossible to decouple $\Delta L_f$ from $\Delta L_r$, regardless of the unlearning algorithm used.

\begin{figure}[htp]
    \centering
    \includegraphics[width=1\linewidth]{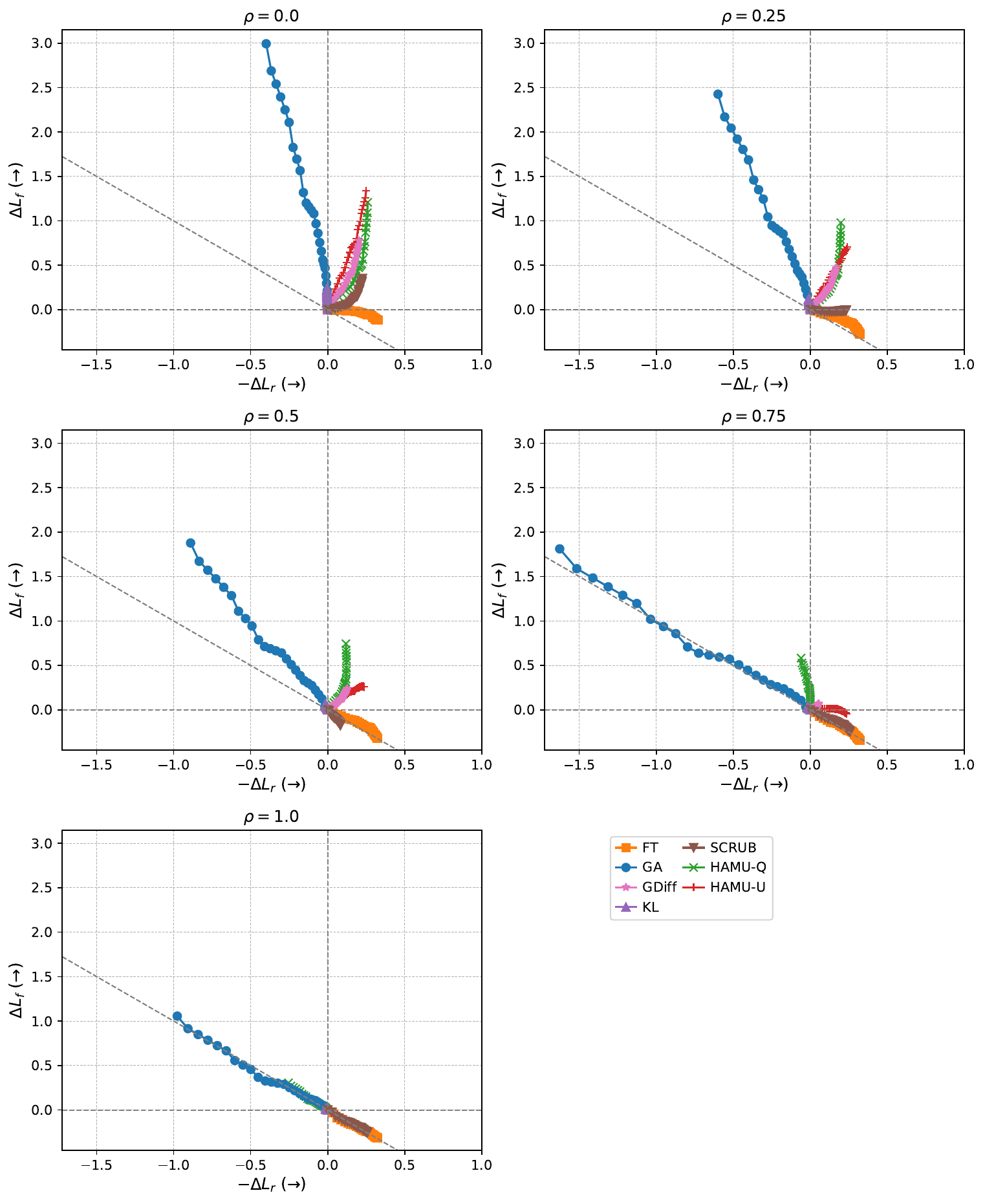}
    \caption{More comprehensive plot of Fig.~\ref{fig:exp_cv_baseline}, comparing \ours with baseline unlearning algorithms for different level of $\rho$ under the CV experimental setting.}
    \label{fig:app:exp_cv_baseline}
\end{figure}

\subsection{Additional Evaluation Metrics and Baselines for LLM QA Task}
\label{app:additional_baselines_metrics}
We provide results for additional evaluation metrics: token accuracy, model utility~(probability)~\citep{maini2024tofu}, and privacy~(MIA)~\citep{shi2024detecting} metrics.
We also include two additional baselines, Gradient Rectified Unlearning~(GRU)~\citep{wang2025gru} and Projecting Conflicting Gradients~(PCGrad)~\citep{yu2020gradient}.
PCGrad as described in~\citet{yu2020gradient} performs multi-task optimization and was not initially designed to be used in unlearning.
We adapt PCGrad's update rule to make use of the gradient of the retain data and the negated gradient of the forget data as the task gradients.
In this experiment, we focus on the more realistic setting of unlearning with optimizers~(described in App.~\ref{app:optimizer}), using learning rate of \num{1e-5}, and unlearn for 3 epochs.

Table~\ref{tab:additional_baselines_metrics} shows that the trend across methods in Retain Acc. and Model Utility follows that of $-\Delta L_r$, and Forget Acc. follows that of $\Delta L_f$.
Thus, \ours achieves a better trade-off and is the only method better than before unlearning on both retain utility and forget quality metrics.
\primal achieves a better MIA~(closer to $0.5$) as it guarantees sufficient improvement in forget quality unlike \reciprocal and GRU which focus on the retain utility instead.

\begin{table}[h]
    \centering
    \caption{Results for additional baselines and evaluation metrics.}
    \label{tab:additional_baselines_metrics}
    \begin{tabular}{crrcccc}
        \toprule
        Method & $-\Delta L_r \uparrow$ & $\Delta L_f \uparrow$ & Retain Acc.(\%) $\uparrow$ & Forget Acc.(\%) $\downarrow$ & Model Utility $\uparrow$ & MIA \\
        \midrule
        Before unlearning & 0.000 & 0.000 & 0.924 & 0.907 & 0.779 & 0.999 \\
        FT               & 0.216   & -0.104 & 0.997 & 0.941 & 0.966 & 1.000 \\
        GA               & -105.392 & 105.552 & 0.002 & 0.002 & 0.000 & 0.447 \\
        GDiff            & -1.452 & 11.284 & 0.769 & 0.485 & 0.188 & 0.186 \\
        KL               & -0.867 & 7.061 & 0.805 & 0.593 & 0.335 & 0.098 \\
        SCRUB            & -0.290 & 13.179 & 0.878 & 0.562 & 0.575 & 0.216 \\
        GRU              & -105.213 & 105.326 & 0.002 & 0.002 & 0.000 & 0.457 \\
        PCGrad           & -1.443 & 11.325 & 0.769 & 0.485 & 0.189 & 0.187 \\
        \primal          & 0.151 & 2.075 & 0.976 & 0.754 & 0.907 & 0.552 \\
        \reciprocal      & 0.008 & 4.372 & 0.931 & 0.663 & 0.788 & 0.198 \\
        \bottomrule
    \end{tabular}
\end{table}

\end{document}